\documentclass{opt2026} 
\usepackage{subcaption}

\usepackage{booktabs}
\usepackage{algorithm}
\usepackage{algorithmic}
\newtheorem{assumption}{Assumption}

\title[Curvature-Aware Clipping for DPSGD]{Curvature-Aware Clipping: Mitigating Bias in DPSGD via Approximate Second-Order Information}

\optauthor{%
\Name{Xincheng Xu} \Email{xincheng.xu@anu.edu.au}\\
\addr Australian National University, Australia
\AND
\Name{Thilina Ranbaduge} \Email{thilina.ranbaduge@csiro.au}\\
\addr CSIRO, Australia
\AND
\Name{Thierry Rakotoarivelo} \Email{thierry.rakotoarivelo@csiro.au}\\
\addr CSIRO, Australia
\AND
\Name{David Smith} \Email{david.smith@csiro.au}\\
\addr CSIRO, Australia
}

\begin{document}

\maketitle

\begin{abstract}%
Differentially Private Stochastic Gradient Descent (DPSGD) has become the de facto standard for training deep neural networks with rigorous privacy guarantees. However, the utility of DPSGD is constrained by gradient clipping bias and DP noise. While existing work has tried to address these issues, the geometric misalignment caused by clipping bias remains a critical challenge. To bridge this gap, we propose a novel framework, \textit{Curvature-Aware Differentially Private Stochastic Gradient Descent (CA-DPSGD)}, which leverages approximate second-order information to construct an optimal, curvature-aware clipping space. CA-DPSGD integrates a rectified estimator for the curvature matrix and is coupled with an anisotropic noise mechanism to ensure privacy guarantees. Extensive empirical experiments demonstrate that our approach outperforms state-of-the-art clipping methods.
\end{abstract}


\section{Introduction}
\label{sec:intro}

Differentially Private Stochastic Gradient Descent (DPSGD)~\citep{abadi2016} is the most widely adopted optimization algorithm used for training deep models with
Differential Privacy (DP). DPSGD enforces ($\epsilon$,$\delta$)-DP through per-sample gradient clipping and calibrated Gaussian noise. However, DPSGD results in performance degradation compared to standard non-private training.

Different DPSGD variants have been proposed to mitigate these limitations, covering two main directions: (1) reducing DP noise variance from a temporal~\citep{zhang2024,zhang2025} or geometric perspective~\citep{zhou2020,yu2021a,asi2021}, and (2) mitigating clipping bias through a temporal perspective~\citep{xiao2023,zhang2023}. However, existing methods often suffer from complicated hyperparameter tuning, excessive computational and memory costs, and a lack of theoretical guarantees. 

To bridge this gap, we propose Curvature-Aware Differentially Private Stochastic Gradient Descent (CA-DPSGD), a novel method that reduces clipping bias from a geometric perspective. We first derive the optimal clipping space to minimize clipping bias while maintaining the same amount of DP noise compared to vanilla DPSGD by solving an optimization problem. We construct the clipping space in each training iteration via a private approximation of the second-order moments of the gradients. Crucially, to reduce the effect of DP noise, we design a DP variance-reduced estimator to correct the DP noise contribution in a privatized batch-gradient second-moment, coupled with an anisotropic noise injection mechanism to ensure DP guarantees. \textbf{Our main contributions are summarized as follows:}

\begin{itemize}
\item We formulate an optimization problem to characterize the clipping geometry under a bias–noise trade-off and derive the corresponding curvature-aware clipping spaces.
\item We design a noise correction mechanism and an adaptive scaling factor to construct a scalable diagonal clipping geometry using only previously privatized gradients.
\item  We establish privacy and convergence guarantees, evaluate CA-DPSGD on image and sentence classification benchmarks, and demonstrate its compatibility with existing DPSGD variants.
\end{itemize}

\section{Related Work}
\label{sec:related}

Existing literature on improving the utility of DPSGD~\citep{abadi2016} has explored adaptive clipping thresholds~\citep{andrew2021,bu2024,xia2023}, privacy budget allocation~\citep{lee2018,chen2023b}, geometric projection~\citep{zhou2020,yu2021a,asi2021,gilani2025}, filtering~\citep{tang2024,zhang2024,zhang2025}, and clipping bias mitigation~\citep{xiao2023,zhang2023,bethune2023}. However, they may require additional tuning, historical states, structured covariance information, or specialized model architectures. In contrast, our method constructs a lightweight diagonal clipping geometry from previously privatized gradients and explicitly corrects the known contribution of DP noise in the second-moment estimate. 

In non-private optimization, second-order methods have demonstrated improved optimization by leveraging curvature information. Adam~\citep{kingma2015} utilizes diagonal second-order gradient statistics, while Shampoo~\citep{gupta2018}, AdaHessian~\citep{yao2021}, and K-FAC~\citep{martens2015} employ structured or block-diagonal curvature approximations.  However, adapting these methods to the private setting is challenging, as the required per-sample statistics need to be protected, which degrades the accuracy of curvature estimation. Unlike approaches that use curvature information to modify the optimizer update, we leverage approximate second-order information to construct the clipping geometry prior to noise injection. More detailed discussions are provided in Appendix~\ref{app:related}.

\section{Preliminaries}
\label{sec:prem}

In this work, we focus on the supervised deep learning setting. The goal is to train a model with parameters $x \in \mathbb{R}^d$ by minimizing an empirical loss function over a dataset $D$ containing $N$ data points. This is formally expressed as the Empirical Risk Minimization (ERM) problem:
$\min_{x \in \mathbb{R}^d} f(x) = \frac{1}{N}\sum_{\xi \in D}f(x;\xi),$
where $f(x; \xi)$ represents the loss function for a single data point $\xi$, and $d$ represents the dimensionality of the model parameters. We use $(\cdot)_t$ to denote the variables in training iteration $t$ and denote the lower bound as $f^{*} = \text{inf } f(x)$. 

The geometry of the parameter space is formally characterized by the Fisher Information Matrix (FIM)~\citep{amari1998}, defined as $F(x) = \mathbb{E}_{\xi}[\nabla f(x; \xi) \nabla f(x; \xi)^\top]$. At iteration $t$, we write $F_t \equiv F(x_t)$. Under the negative log-likelihood setting, this quantity provides a local second-order characterization of the optimization landscape.

DPSGD~\citep{abadi2016} samples a mini-batch $\mathcal{B}_t$ of size $B$,
clips each per-sample gradient to threshold $C$, and adds Gaussian noise
to the averaged clipped gradient:
$
    \bar{g}_t
    =
    \frac{1}{B}
    \sum_{\xi\in\mathcal{B}_t}
    g_{t,\xi}
    \min\left\{1,\frac{C}{\|g_{t,\xi}\|_2}\right\}
    +w_t,\
    w_t\sim\mathcal{N}(0,\sigma^2I_d),
$
where $\sigma$ denotes the standard deviation of the Gaussian noise added
to the batch-averaged gradient. The model is then updated as
$x_{t+1}=x_t-\eta\bar{g}_t$. Formal definitions of differential privacy
and the Gaussian mechanism are provided in Appendix~\ref{app:dp}.

\section{Curvature-Aware Clipping}
\label{sec:theory}

\subsection{Optimal Clipping Geometry}

Vanilla DPSGD utilizes isotropic clipping, projecting gradients onto a
Euclidean ball, which does not account for the anisotropic structure of
the gradient distribution and can lead to unnecessary clipping bias.
To reduce this bias, we generalize the clipping geometry using a bounded
positive definite matrix $G_t \succ 0$. Let
$\|g\|_{G_t}=\sqrt{g^\top G_t g}$. The corresponding clipping operator
with radius $C$ is
$
    \mathrm{clip}_{G_t}(g_{t,\xi},C)
    =
    g_{t,\xi}
    \min\left\{1,\frac{C}{\|g_{t,\xi}\|_{G_t}}\right\}.
$

Let
$b_{G_t}(C)
=
\mathbb{E}_{\xi}
[g_{t,\xi}-\mathrm{clip}_{G_t}(g_{t,\xi},C)]$
denote the clipping bias. We obtain the following bound.

\begin{lemma}[Clipping Bias Bound]
\label{lemma:bias_bound}
For a given clipping radius $C$,
\begin{equation}
    \|b_{G_t}(C)\|_{G_t}
    \le
    \frac{\mathrm{Tr}(G_tF_t)}{C},
\end{equation}
where $F_t$ is the gradient second-moment matrix defined in
Section~\ref{sec:prem}.
\end{lemma}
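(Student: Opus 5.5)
The plan is to bound the bias pointwise, sample by sample, and then take expectations. Write $g=g_{t,\xi}$ and $r(g)=\|g\|_{G_t}$. By definition of $\mathrm{clip}_{G_t}$, the residual is
\[
g-\mathrm{clip}_{G_t}(g,C)=g\Bigl(1-\min\bigl\{1,\tfrac{C}{r(g)}\bigr\}\Bigr)=g\Bigl(1-\tfrac{C}{r(g)}\Bigr)\mathbf{1}\{r(g)>C\}.
\]
It vanishes when $r(g)\le C$.

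First, since $\|\cdot\|_{G_t}$ is a norm for $G_t\succ 0$, Jensen's inequality (the triangle inequality for the expectation) gives
\[
\|b_{G_t}(C)\|_{G_t}\le \mathbb{E}_\xi\bigl\|g-\mathrm{clip}_{G_t}(g,C)\bigr\|_{G_t}
=\mathbb{E}_\xi\bigl[(r(g)-C)\,\mathbf{1}\{r(g)>C\}\bigr].
\]

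Next I would use an elementary scalar inequality in Markov/Chebyshev style. For $r>C>0$ we have $r-C\le r\le r^2/C$. For $r\le C$ the indicator kills the term, and the bound $0\le r^2/C$ holds trivially. Hence, pointwise,
\[
(r(g)-C)\mathbf{1}\{r(g)>C\}\le \frac{r(g)^2}{C}.
\]
(A slightly sharper $r^2/(4C)$ is available via $r-C\le r^2/(4C)$, but it is not needed.)

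Finally I would identify the second moment with the trace:
\[
\mathbb{E}_\xi[r(g)^2]=\mathbb{E}_\xi[g^\top G_t g]=\mathbb{E}_\xi\,\mathrm{Tr}(G_t g g^\top)=\mathrm{Tr}\bigl(G_t\,\mathbb{E}_\xi[gg^\top]\bigr)=\mathrm{Tr}(G_tF_t).
\]
This uses linearity of trace and expectation together with the definition of $F_t$ from Section~\ref{sec:prem}. Chaining the three displays yields the claim.

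There is no real obstacle. The only points needing care are two. First, $\|\cdot\|_{G_t}$ must be a genuine norm, which holds because $G_t\succ0$. Second, $\mathbb{E}_\xi[gg^\top]$ must be finite, so that interchanging trace and expectation is valid; this is implicit in $F_t$ being well defined.
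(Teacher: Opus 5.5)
Your proposal is correct and follows the paper's proof essentially step for step: Jensen's inequality for the $G_t$-norm, the pointwise scalar bound $(1-C/x)_+x\le x^2/C$, and the identity $\mathbb{E}_\xi[g^\top G_t g]=\mathrm{Tr}(G_tF_t)$. The only difference is in verifying the case $x>C$: the paper completes the square ($x^2-Cx+C^2>0$), while your chain $r-C\le r\le r^2/C$ is more direct, and your remark that $r-C\le r^2/(4C)$ (since $(r-2C)^2\ge 0$) would improve the constant by a factor of four.
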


The proof is provided in Appendix~\ref{app:proof_lemma}.
Directly minimizing $\mathrm{Tr}(G_tF_t)$ yields the trivial solution
$G_t\rightarrow 0$, which reduces the clipping-bias bound at the cost of
increasing the magnitude of the injected noise. In the transformed space,
let $w_t'\sim\mathcal{N}(0,\sigma^2I_d)$. Mapping the noise back to the
original parameter space gives
$\mathrm{Cov}(G_t^{-1/2}w_t')=\sigma^2G_t^{-1}$.
To keep the total noise variance equal to that of isotropic DPSGD, we
impose $\mathrm{Tr}(G_t^{-1})=d$ and formulate
\begin{equation}
\label{eq:optimization_problem}
    \min_{G_t\succ0}
    \quad
    \frac{\mathrm{Tr}(G_tF_t)}{C}
    \qquad
    \mathrm{s.t.}
    \quad
    \mathrm{Tr}(G_t^{-1})=d.
\end{equation}

\begin{theorem}[Optimal Clipping Matrix]
\label{thm:optimal_G}
For $F_t\succ0$, the clipping matrix that minimizes the bias upper bound
in Equation~\eqref{eq:optimization_problem} is
\begin{equation}
    G_t^\star
    =
    \lambda_t F_t^{-1/2},
    \qquad
    \lambda_t
    =
    \frac{\mathrm{Tr}(F_t^{1/2})}{d}.
\end{equation}
\end{theorem}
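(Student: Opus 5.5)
We minimize $\mathrm{Tr}(G F)$ over $G \succ 0$ subject to $\mathrm{Tr}(G^{-1}) = d$, writing $F = F_t \succ 0$ and $G = G_t$. The factor $1/C$ is a positive constant and can be dropped. The cleanest route is a direct inequality argument that gives a lower bound attained exactly at $G^\star$. A Lagrangian argument would serve as a consistency check.

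\textbf{Step 1 (the key inequality).} For any $G \succ 0$, write
\[
\mathrm{Tr}(F^{1/2}) = \mathrm{Tr}\bigl( (G^{1/2} F^{1/2})(G^{-1/2}) \bigr).
\]
Apply Cauchy--Schwarz for the Frobenius inner product $\langle A, B\rangle = \mathrm{Tr}(A^\top B)$ with $A = F^{1/2} G^{1/2}$ and $B = G^{-1/2}$. Using cyclicity, $\mathrm{Tr}(A^\top B) = \mathrm{Tr}(G^{1/2} F^{1/2} G^{-1/2}) = \mathrm{Tr}(F^{1/2})$, and
\[
\|A\|_F^2 = \mathrm{Tr}(G^{1/2} F G^{1/2}) = \mathrm{Tr}(GF), \qquad \|B\|_F^2 = \mathrm{Tr}(G^{-1}).
\]
Hence
\[
\mathrm{Tr}(F^{1/2})^2 \le \mathrm{Tr}(GF)\,\mathrm{Tr}(G^{-1}) = d\,\mathrm{Tr}(GF)
\]
on the constraint set. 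This gives the lower bound $\mathrm{Tr}(GF) \ge \mathrm{Tr}(F^{1/2})^2/d$.

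\textbf{Step 2 (attainment and feasibility).} Equality in Cauchy--Schwarz holds iff $F^{1/2} G^{1/2} = c\, G^{-1/2}$ for some scalar $c>0$. Multiplying on the right by $G^{-1/2}$ gives $F^{1/2} = c\, G^{-1}$, i.e.\ $G = c F^{-1/2}$.

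The constraint $\mathrm{Tr}(G^{-1}) = c^{-1}\mathrm{Tr}(F^{1/2}) = d$ forces $c = \mathrm{Tr}(F^{1/2})/d = \lambda_t$. Substituting $G^\star = \lambda_t F^{-1/2}$ yields
\[
\mathrm{Tr}(G^\star F) = \lambda_t \mathrm{Tr}(F^{1/2}) = \mathrm{Tr}(F^{1/2})^2/d,
\]
which matches the lower bound. So $G^\star$ is a minimizer, and it is the unique one because the equality case is unique. It is positive definite, since $F \succ 0$ implies $F^{-1/2} \succ 0$ and $\lambda_t > 0$.

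\textbf{Step 3 (optional cross-check).} Form the Lagrangian $\mathrm{Tr}(GF) + \mu(\mathrm{Tr}(G^{-1}) - d)$. Using $\nabla_G \mathrm{Tr}(G^{-1}) = -G^{-2}$, stationarity gives $F = \mu G^{-2}$, so $G = \sqrt{\mu}\, F^{-1/2}$. This agrees with Step 2. Convexity of $G \mapsto \mathrm{Tr}(G^{-1})$ on the positive definite cone would also support global optimality, but Step 1 already settles that.

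The main obstacle is arranging the factorization in Step 1 so that the cross term collapses to $\mathrm{Tr}(F^{1/2})$ even though $G$ and $F$ need not commute. The choice $A = F^{1/2} G^{1/2}$, $B = G^{-1/2}$ handles this through cyclicity of the trace.
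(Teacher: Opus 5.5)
Your proof is correct, but it follows a different route from the paper's.

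The paper forms the Lagrangian $\mathrm{Tr}(G_tF_t)/C+\lambda_1(\mathrm{Tr}(G_t^{-1})-d)$ and sets the matrix gradient $F_t/C-\lambda_1G_t^{-2}$ to zero. It then takes the positive definite square root of $G_t^2=\lambda_1CF_t^{-1}$ and fixes the scale from the constraint. That is essentially your Step 3. On its own, it only identifies a stationary point; it does not show that point is a global minimizer or that it is unique.

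Your Cauchy--Schwarz factorization $A=F^{1/2}G^{1/2}$, $B=G^{-1/2}$ supplies exactly that. The bound $\mathrm{Tr}(GF)\ge\mathrm{Tr}(F^{1/2})^2/d$ holds for every feasible $G$, with equality only at $\lambda_tF^{-1/2}$. This gives optimality, uniqueness, and the optimal value $\mathrm{Tr}(F_t^{1/2})^2/(dC)$ of the bias bound at once. It needs no matrix calculus and has no difficulty with $G$ and $F$ failing to commute.

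The Lagrangian route is quicker for guessing the form of $G_t^\star$, but it would need a convexity argument to be complete. Your Step 3 remark should be phrased that way. Relax the constraint to $\mathrm{Tr}(G^{-1})\le d$, which defines a convex set. The constraint is active at the optimum, because rescaling $G\mapsto sG$ with $s<1$ lowers the linear objective. The equality-constrained set itself is not convex, so convexity of $G\mapsto\mathrm{Tr}(G^{-1})$ alone does not settle it.

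As a bonus, compare your value formula with the feasible choice $G=I$, whose value is $\mathrm{Tr}(F_t)/C$. The inequality $\mathrm{Tr}(F^{1/2})^2\le d\,\mathrm{Tr}(F)$ then quantifies the gain from the curvature-aware geometry, which the paper never states.
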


Theorem~\ref{thm:optimal_G} assigns smaller clipping weights to directions
with larger second moments, thereby adapting the clipping geometry to the
anisotropic gradient distribution. The proof is provided in
Appendix~\ref{app:proof_theorem}.

However, the calculation of $G_t^*$ could become computationally prohibitive in high-dimensional settings and may require additional privacy budget cost, which motivates the scalable diagonal approximation used below.

\subsection{Our Proposed Algorithm: CA-DPSGD}
\label{sec:algorithm}

Algorithm~\ref{alg:cdpsgd} summarizes how CA-DPSGD approximates the
optimal clipping geometry during private training. Instead of directly estimating
the per-sample second-moment matrix $F_t$, CA-DPSGD constructs a diagonal
clipping matrix $G_t$ from previously privatized batch gradients, which
introduces no additional privacy cost by post-processing.

\begin{algorithm}[t]
\small
\caption{Curvature-Aware DPSGD (CA-DPSGD)}
\label{alg:cdpsgd}
\begin{algorithmic}
\REQUIRE $D$, $\eta$, $C$, $\sigma$, $T$, $\beta$.
\FOR{$t=0$ to $T-1$}
    \FOR{each $\xi\in\mathcal{B}_t$}
        \STATE $\tilde g_{t,\xi}
        =\mathrm{clip}_{G_t}(g_{t,\xi},C)$
    \ENDFOR
    \STATE $\tilde g_t
    =\frac{1}{B}\sum_{\xi\in\mathcal{B}_t}\tilde g_{t,\xi}+w_t$,
    $~~w_t\sim\mathcal{N}(0,\sigma^2G_t^{-1})$
    \STATE $x_{t+1}=x_t-\eta\tilde g_t$
    \STATE $\tilde v_{t+1}
    =\beta\tilde v_t+(1-\beta)\tilde g_t^2$
    \STATE $v_{t+1}
    =\left(
    \frac{\sqrt{(\sigma^2/\lambda_t)^2+4\tilde v_{t+1}}
    -\sigma^2/\lambda_t}{2}
    \right)^2$
    \STATE $\lambda_{t+1}
    =\mathrm{Mean}(v_{t+1}^{1/2})$
    \STATE $G_{t+1}
    =\lambda_{t+1}\mathrm{diag}(v_{t+1}^{-1/2})$
\ENDFOR
\RETURN $x_T$
\end{algorithmic}
\end{algorithm}

\noindent\textbf{Curvature-Aware Clipping and Anisotropic Noise.}
CA-DPSGD clips each per-sample gradient according to the current
$G_t$-norm: $ \|g_{t, \xi}\|_{G_t} = \| (G_t)^\frac{1}{2}g_{t, \xi} \|_2$.
This is operationally equivalent to mapping the gradient to the
transformed space, performing Euclidean clipping, and mapping it back.
Accordingly, isotropic Gaussian noise in the transformed space induces
the anisotropic covariance $\sigma^2G_t^{-1}$ in the original parameter
space. The constraint $\mathrm{Tr}(G_t^{-1})=d$ preserves the same total
noise variance as isotropic DPSGD.

\noindent\textbf{Noise-Corrected Second-Moment Estimation.}
Directly using privatized batch gradients to estimate second-order
information introduces the DP noise variance into the estimate.
We therefore maintain an exponential moving average of the squared
privatized gradients, as shown in Algorithm~\ref{alg:cdpsgd}, and
explicitly correct the known contribution of the anisotropic DP noise.
Under the diagonal geometry
$G_t=\lambda_t\mathrm{diag}(v_t^{-1/2})$ and the temporal approximation
$v_t\approx v_{t+1}$, we obtain
\begin{equation}
\label{eq:noise_relation}
    \tilde v_{t+1}
    =
    v_{t+1}
    +
    \frac{\sigma^2}{\lambda_t}\sqrt{v_{t+1}}.
\end{equation}
Solving Equation~\eqref{eq:noise_relation} for its positive root gives
the noise-corrected estimate
\begin{equation}
\label{eq:noise_correction}
    v_{t+1}
    =
    \left(
    \frac{
    \sqrt{(\sigma^2/\lambda_t)^2+4\tilde v_{t+1}}
    -\sigma^2/\lambda_t
    }{2}
    \right)^2.
\end{equation}
The temporal approximation is supported by the stability result in
Appendix~\ref{app:proof_stability}.

\noindent\textbf{Clipping-Geometry Update.}
Finally, enforcing $\mathrm{Tr}(G_{t+1}^{-1})=d$ gives
\begin{equation}
\label{eq:geometry_update}
    \lambda_{t+1}
    =
    \mathrm{Mean}(v_{t+1}^{1/2}),
    \qquad
    G_{t+1}
    =
    \lambda_{t+1}\mathrm{diag}(v_{t+1}^{-1/2}).
\end{equation}

\section{Theoretical Analysis}
\label{sec:proof}

\noindent\textbf{Privacy Guarantee.}
\begin{theorem}
\label{thm:privacy}
Let $q=B/N$ be the sampling probability. There exist constants $c_1$
and $c_2$ such that, for any $\epsilon\le c_1q^2T$ and $\delta>0$,
Algorithm~\ref{alg:cdpsgd} satisfies $(\epsilon,\delta)$-DP if
\begin{equation}
\label{eq:privacy_noise}
    \sigma
    \ge
    c_2\frac{C}{B}
    \frac{q\sqrt{T\log(1/\delta)}}{\epsilon}.
\end{equation}
\end{theorem}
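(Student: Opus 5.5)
The plan is to reduce Algorithm~\ref{alg:cdpsgd} to vanilla DPSGD run in a transformed coordinate system, and then invoke the moments accountant bound of \citet{abadi2016}, which has exactly the form of Equation~\eqref{eq:privacy_noise}. The reduction rests on one observation: at iteration $t$, the matrix $G_t$ is a deterministic function of $\tilde g_0,\dots,\tilde g_{t-1}$ (through $\tilde v_t$, $v_t$, $\lambda_t$) and of public hyperparameters. Hence, conditional on the past released outputs, $G_t$ is a fixed bounded positive definite matrix that does not depend on the current mini-batch. It therefore incurs no privacy cost, by post-processing.

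\textbf{Per-step sensitivity.} Fix the history and define $h_{t,\xi}=G_t^{1/2}\tilde g_{t,\xi}$. Since $\tilde g_{t,\xi}=\mathrm{clip}_{G_t}(g_{t,\xi},C)$, we get $\|h_{t,\xi}\|_2=\|\tilde g_{t,\xi}\|_{G_t}\le C$. Define also $w_t'=G_t^{1/2}w_t$. Because $w_t\sim\mathcal{N}(0,\sigma^2G_t^{-1})$, we have $w_t'\sim\mathcal{N}(0,\sigma^2 I_d)$. Then
\begin{equation*}
G_t^{1/2}\tilde g_t=\frac{1}{B}\sum_{\xi\in\mathcal{B}_t}h_{t,\xi}+w_t',
\end{equation*}
which is precisely one step of vanilla DPSGD on the transformed per-sample vectors $h_{t,\xi}$. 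Adding or removing one sample changes the average by at most $C/B$ in $\ell_2$, so this is a subsampled Gaussian mechanism with noise multiplier $\sigma B/C$. Since $G_t^{1/2}$ is invertible and history-measurable, $\tilde g_t=G_t^{-1/2}(G_t^{1/2}\tilde g_t)$ is a post-processing of this release. The same conditional-on-history argument then covers the updates to $x_{t+1}$, $\tilde v_{t+1}$, $v_{t+1}$, $\lambda_{t+1}$ and $G_{t+1}$.

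\textbf{Composition.} The whole algorithm is an adaptive composition of $T$ Poisson-subsampled Gaussian mechanisms with sampling rate $q$ and noise multiplier $\sigma B/C$. Each mechanism is chosen adaptively, since $G_t$ depends on previous outputs. The moments accountant (Theorem~1 of \citet{abadi2016}) holds under adaptive composition because it bounds the log-moment of each step conditionally on the prior outputs. It gives constants $c_1,c_2$ such that, for $\epsilon\le c_1q^2T$, the composition is $(\epsilon,\delta)$-DP whenever $\sigma B/C\ge c_2 q\sqrt{T\log(1/\delta)}/\epsilon$. Rearranging this condition yields Equation~\eqref{eq:privacy_noise}.

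\textbf{Main obstacle.} The delicate step is justifying adaptivity rigorously. The per-step log-moment bound must hold uniformly over every possible value of $G_t$. It does, because after the change of variables the sensitivity $C/B$ and the noise covariance $\sigma^2I_d$ are independent of $G_t$. We also need $G_t$ to be well defined and invertible, that is $v_t>0$ coordinatewise. The positive-root formula in Equation~\eqref{eq:noise_correction} gives $v_{t+1}>0$ whenever $\tilde v_{t+1}>0$, and this holds almost surely because of the Gaussian noise. Initializing $G_0=I_d$ handles the first step.
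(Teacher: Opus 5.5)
Your proposal is correct and follows essentially the same route as the paper. You whiten by $G_t^{1/2}$, so that the per-sample vectors have $\ell_2$ norm at most $C$ and the noise becomes $\mathcal{N}(0,\sigma^2 I_d)$; you treat $G_t$ and the map back via $G_t^{-1/2}$ as post-processing of earlier releases; and you compose the $T$ subsampled Gaussian steps with the moments accountant. Your explicit handling of adaptive composition (the per-step log-moment bound is uniform in $G_t$) and of the well-definedness of $G_t$ is more careful than the paper's proof, which passes from a per-iteration $(\mathcal{O}(qC/(B\sigma)),\delta/T)$ statement directly to the accountant.
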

Under $z=G_t^{1/2}g$, the mechanism reduces to Euclidean clipping with isotropic Gaussian noise; adaptation of $G_t$ is post-processing of previously privatized quantities. The proof is provided in Appendix~\ref{app:proof_privacy}.

\noindent\textbf{Convergence Guarantee.}
We assume that $f$ is $L$-smooth, the stochastic-gradient variance is
bounded by $r^2$, and $G_t\succeq\mu I_d$ for some $\mu>0$.
The formal assumptions are provided in Appendix~\ref{app:assumptions}.

\begin{theorem}
\label{thm:convergence}
Under the above assumptions, let
$\eta\le\frac{B}{2(2B+1)L}$. After $T$ iterations,
\begin{align}
\min_{0\le t<T}\mathbb{E}\|\nabla f(x_t)\|_2^2
\le\;&
\frac{4(f(x_0)-f^*)}{\eta T}
+
2L\eta\left(d\sigma^2+\frac{r^2}{B}\right)  +
\frac{4}{\mu C^2T}
\sum_{t=0}^{T-1}\mathrm{Tr}(G_tF_t)^2.
\label{eq:convergence}
\end{align}
\end{theorem}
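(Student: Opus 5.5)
We follow the standard biased-SGD descent-lemma argument and treat the clipping bias as a perturbation of the true gradient. At iteration $t$, write the update direction as
\[
\tilde g_t=\nabla f(x_t)-b_t+e_t+w_t,
\]
where $b_t=b_{G_t}(C)$ is the clipping bias of Lemma~\ref{lemma:bias_bound}, $e_t$ is the zero-mean mini-batch sampling error of the clipped gradients, and $w_t\sim\mathcal N(0,\sigma^2G_t^{-1})$ is the DP noise.

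By $L$-smoothness,
\[
f(x_{t+1})\le f(x_t)-\eta\langle\nabla f(x_t),\tilde g_t\rangle+\tfrac{L\eta^2}{2}\|\tilde g_t\|_2^2 .
\]
We take the conditional expectation given $x_t$, under which $e_t$ and $w_t$ have mean zero and are independent. This gives
\[
\mathbb E\|\tilde g_t\|^2\le 2\|\nabla f(x_t)\|^2+2\|b_t\|^2+\frac{r^2}{B}+\sigma^2\mathrm{Tr}(G_t^{-1}).
\]
Here we use the constraint $\mathrm{Tr}(G_t^{-1})=d$, so the noise term equals $d\sigma^2$. We also use that clipping does not increase variance beyond $r^2$, up to a constant. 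The $2B+1$ in the step-size condition suggests tracking the batch-variance term more carefully, e.g.\ bounding $\mathbb E\|e_t\|^2\le(\|\nabla f\|^2+r^2)/B$, so that the coefficient of $\|\nabla f(x_t)\|^2$ becomes something like $L\eta(2+1/B)$. The condition $\eta\le\frac{B}{2(2B+1)L}$ then makes this at most $1/4$.

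For the cross term we use Young's inequality:
\[
\langle\nabla f,b_t\rangle\le\tfrac14\|\nabla f\|^2+\|b_t\|_2^2 .
\]
The result is a per-step inequality of the form
\[
\tfrac{\eta}{4}\|\nabla f(x_t)\|^2\le f(x_t)-\mathbb E f(x_{t+1})+\tfrac{L\eta^2}{2}\Bigl(d\sigma^2+\tfrac{r^2}{B}\Bigr)+\eta\|b_t\|_2^2 ,
\]
where the $\eta^2\|b_t\|^2$ term is absorbed into $\eta\|b_t\|^2$ using $L\eta\le 1/4$.

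Next we convert the bias from the $G_t$-norm to the Euclidean norm. Since $G_t\succeq\mu I_d$, we have $\|b\|_2^2\le\frac1\mu\|b\|_{G_t}^2$. Lemma~\ref{lemma:bias_bound} then gives
\[
\|b_t\|_2^2\le\frac{\mathrm{Tr}(G_tF_t)^2}{\mu C^2}.
\]
Finally, we sum over $t=0,\dots,T-1$, telescope, use $f(x_T)\ge f^*$, divide by $\eta T/4$, and bound the minimum by the average. This produces the three terms
\[
\frac{4(f(x_0)-f^*)}{\eta T},\qquad 2L\eta\Bigl(d\sigma^2+\frac{r^2}{B}\Bigr),\qquad \frac{4}{\mu C^2T}\sum_{t}\mathrm{Tr}(G_tF_t)^2 .
\]

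The main obstacle is the constant bookkeeping. The Young split and the variance decomposition must be arranged so that the coefficient of $\|\nabla f\|^2$ is exactly $\eta/4$ under the stated step size, and the bias coefficient comes out as $4$ after division. A subtler issue is that $G_t$ depends on past noise, so it is random. Each bound must therefore be applied conditionally on $\mathcal F_t$, which contains $x_t$ and $G_t$, before taking total expectations. Under this conditioning $w_t$ has covariance $\sigma^2G_t^{-1}$, independent of the current batch, and Lemma~\ref{lemma:bias_bound} holds pointwise in $G_t$.
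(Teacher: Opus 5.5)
Your argument is essentially the paper's. Both use the descent lemma and the mean/variance split of the clipped mini-batch average with $\mathrm{Tr}\,\mathrm{Var}(\tilde g_{t,\xi})\le\mathbb{E}\|\tilde g_{t,\xi}\|_2^2\le\mathbb{E}\|g_{t,\xi}\|_2^2\le\|\nabla f(x_t)\|_2^2+r^2$, which is the bound you eventually adopt and the source of the $2B+1$. Both then use $\mathbb{E}\|w_t\|_2^2=\sigma^2\mathrm{Tr}(G_t^{-1})=d\sigma^2$, the conversion $\|b_t\|_2^2\le\mu^{-1}\|b_t\|_{G_t}^2$ combined with Lemma~\ref{lemma:bias_bound}, and telescoping. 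Your remark about conditioning on $\mathcal{F}_t$ is a point the paper leaves implicit. Done that way, the bias sum in the final bound carries an outer expectation.

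One bookkeeping step does not work as written. With your Young split $\langle\nabla f(x_t),b_t\rangle\le\frac14\|\nabla f(x_t)\|_2^2+\|b_t\|_2^2$, the total coefficient of $\|b_t\|_2^2$ is $\eta+L\eta^2$. That exceeds $\eta$ and cannot be absorbed into $\eta\|b_t\|_2^2$; $L\eta\le\frac14$ only gives $\frac54\eta$.

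The paper instead uses the symmetric split $\frac12\|\nabla f(x_t)\|_2^2+\frac12\|b_t\|_2^2$. This gives a bias coefficient $\frac{\eta}{2}+L\eta^2\le\frac{3\eta}{4}\le\eta$ and a gradient coefficient $\frac{\eta}{2}-L\eta^2\bigl(1+\frac{1}{2B}\bigr)\ge\frac{\eta}{4}$ under $\eta\le\frac{B}{2(2B+1)L}$. That is exactly your claimed per-step inequality.

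Alternatively, keep your split and note that the gradient coefficient is then at least $\frac{3\eta}{4}-\frac{\eta}{4}=\frac{\eta}{2}$. Dividing by $\eta T/2$ instead of $\eta T/4$ yields $\frac{2(f(x_0)-f^*)}{\eta T}+L\eta\bigl(d\sigma^2+\frac{r^2}{B}\bigr)+\frac{5}{2\mu C^2T}\sum_{t}\mathrm{Tr}(G_tF_t)^2$. This is below the stated right-hand side, so the theorem still follows.

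Relatedly, it is $\frac{L\eta}{2}\bigl(2+\frac1B\bigr)$, not $L\eta\bigl(2+\frac1B\bigr)$, that the step-size condition bounds by $\frac14$. Your first, unrefined variance claim (``up to a constant'') is also unnecessary; the second-moment bound you switch to is the one that yields the stated constants.
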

The bound separates the optimization error, stochastic and DP noise,
and a geometry-dependent clipping-bias term. The last term shows how
the choice of $G_t$ affects clipping bias under anisotropic gradient
second moments. The proof is provided in
Appendix~\ref{app:proof_convergence}.

\begin{table}[t!]
\centering
\small
\renewcommand{\arraystretch}{0.8}
\caption{Test accuracy (\%) comparison on various datasets.}
\label{tab:exp_main}
\resizebox{\textwidth}{!}{
\begin{tabular}{llccccc}
\toprule
Dataset & Model & $(\epsilon, \delta)$ & DPSGD & Auto-S & DP-PSAC & CA-DPSGD \\
\midrule\smallskip
MNIST & 4-layer CNN & $(3, 1\text{e-}5)$ & 98.07 $\pm$ 0.10 & 98.13 $\pm$ 0.04 & 98.17 $\pm$ 0.10 & \textbf{98.32 $\pm$ 0.12} \\\smallskip
Fashion-MNIST & 4-layer CNN & $(3, 1\text{e-}5)$ & 86.45 $\pm$ 0.42 & 86.46 $\pm$ 0.07 & 86.56 $\pm$ 0.13 & \textbf{86.72 $\pm$ 0.18} \\\smallskip
CIFAR-10 & SimCLRv2 & $(2, 1\text{e-}5)$ & 92.62 $\pm$ 0.03 & 92.57 $\pm$ 0.10 & 92.52 $\pm$ 0.12 & \textbf{92.82 $\pm$ 0.09} \\\smallskip
ImageNette & ResNet9 & $(8, 1\text{e-}4)$ & 60.91 $\pm$ 0.71 & 60.97 $\pm$ 0.57 & 61.39 $\pm$ 0.53 & \textbf{62.21 $\pm$ 0.47} \\\smallskip
CelebA & ResNet9 & $(8, 5\text{e-}6)$ & 88.49 $\pm$ 0.13 & 88.61 $\pm$ 0.13 & 88.55 $\pm$ 0.12 & \textbf{88.67 $\pm$ 0.12} \\
\midrule
MNLI & RoBERTa-base & $(8, 1\text{e-}5)$ & 80.43 $\pm$ 0.03 & 80.31 $\pm$ 0.39 & 80.54 $\pm$ 0.16 & \textbf{80.58 $\pm$ 0.27} \\\smallskip
QQP & RoBERTa-base & $(8, 1\text{e-}5)$ & 85.58 $\pm$ 0.27 & 85.40 $\pm$ 0.31 & 85.63 $\pm$ 0.24 & \textbf{85.64 $\pm$ 0.40} \\\smallskip
QNLI & RoBERTa-base & $(8, 1\text{e-}5)$ & 85.79 $\pm$ 0.42 & 86.10 $\pm$ 0.56 & 85.88 $\pm$ 0.51 & \textbf{86.29 $\pm$ 0.65} \\\smallskip
SST-2 & RoBERTa-base & $(8, 1\text{e-}5)$ & 89.38 $\pm$ 0.33 & 89.42 $\pm$ 0.41 & 89.51 $\pm$ 0.36 & \textbf{89.54 $\pm$ 0.29} \\
\bottomrule
\end{tabular}}
\end{table}

\section{Experiments}
\label{sec:exp}

\noindent\textbf{Experimental Setting.}
We evaluate CA-DPSGD on five image classification datasets (MNIST~\citep{deng2012}, Fashion-MNIST~\citep{xiao2017}, CIFAR-10~\citep{krizhevsky2009}, ImageNette~\citep{deng2009imagenet}, and CelebA~\citep{liu2015deep}) and four GLUE tasks~\citep{wang2018} (MNLI, QQP, QNLI, and SST-2). We compare against vanilla DPSGD~\citep{abadi2016}, Auto-S~\citep{bu2024}, and DP-PSAC~\citep{xia2023}. Full experimental settings are provided in Appendix~\ref{app:exp_setting}.

\noindent\textbf{Main Results.}
Table~\ref{tab:exp_main} reports test accuracy across all nine datasets.
CA-DPSGD achieves the highest mean accuracy across the evaluated methods,
with larger improvements on more challenging settings, including gains
of 1.30\% over vanilla DPSGD on ImageNette and 0.50\% on QNLI. These results indicate that adapting the clipping geometry can improve utility across both vision and language tasks.

CA-DPSGD can also be incorporated into existing DPSGD variants,
including IOM~\citep{xiao2023}, KFSGD~\citep{zhang2025}, and
DPAdamBC~\citep{tang2024}. Detailed results are provided in
Appendix~\ref{app:exp_com}. Additional ablations, including the adaptive $\lambda$ mechanism, are provided in Appendix~\ref{app:exp_ab}.

\section{Conclusion}
\label{sec:conclusion}
We proposed CA-DPSGD, a curvature-aware clipping framework that uses
a noise-corrected second-moment estimate to construct a scalable
diagonal clipping geometry with anisotropic Gaussian noise. Theoretical
analysis establishes privacy and convergence guarantees, while
experiments across vision and language tasks show improvements over
the evaluated clipping baselines. A limitation is that the diagonal
geometry does not capture cross-parameter correlations, motivating more expressive clipping geometries in future work.

\bibliography{sample}

@inproceedings{zhang2024,
  author       = {Xinwei Zhang and
                  Zhiqi Bu and
                  Mingyi Hong and
                  Meisam Razaviyayn},
  editor       = {Amir Globersons and
                  Lester Mackey and
                  Danielle Belgrave and
                  Angela Fan and
                  Ulrich Paquet and
                  Jakub M. Tomczak and
                  Cheng Zhang},
  title        = {{DOPPLER:} Differentially Private Optimizers with Low-pass Filter
                  for Privacy Noise Reduction},
  booktitle    = {Annual Conference on Neural Information Processing Systems (NeurIPS)},
  year         = {2024},
  url          = {http://papers.nips.cc/paper\_files/paper/2024/hash/49c466ccc038f39b08b1980a2b06673c-Abstract-Conference.html},
  bibsource    = {dblp computer science bibliography, https://dblp.org},
  address={Vancouver, BC, Canada},
  pages = {1--26},
  publisher={Curran Associates}
}

@inproceedings{xiao2023,
  title={A theory to instruct differentially-private learning via clipping bias reduction},
  author={Xiao, Hanshen and Xiang, Zihang and Wang, Di and Devadas, Srinivas},
  booktitle={IEEE Symposium on Security and Privacy (SP)},
  pages={2170--2189},
  year={2023},
  organization={IEEE},
  publisher={IEEE},
  address={San Francisco, CA, USA} 
}

@article{dwork2014,
  title={The algorithmic foundations of differential privacy},
  author={Dwork, Cynthia and Roth, Aaron and others},
  journal={Foundations and Trends{\textregistered} in Theoretical Computer Science},
  volume={9},
  number={3--4},
  pages={211--407},
  year={2014},
  publisher={Now Publishers, Inc.}
}

@inproceedings{abadi2016,
author = {Abadi, Martin and Chu, Andy and Goodfellow, Ian and McMahan, H. Brendan and Mironov, Ilya and Talwar, Kunal and Zhang, Li},
title = {Deep Learning with Differential Privacy},
year = {2016},
isbn = {9781450341394},
publisher = {Association for Computing Machinery},
address = {New York, NY, USA},
url = {https://doi.org/10.1145/2976749.2978318},
doi = {10.1145/2976749.2978318},
pages = {308–318},
numpages = {11},
location = {Vienna, Austria},
series = {CCS '16},
booktitle = {SIGSAC Conference on Computer and Communications Security}
}

@article{andrew2021,
  title={Differentially private learning with adaptive clipping},
  author={Andrew, Galen and Thakkar, Om and McMahan, Brendan and Ramaswamy, Swaroop},
  journal={Advances in Neural Information Processing Systems},
  volume={34},
  pages={17455--17466},
  year={2021}
}

@inproceedings{yu2021a,
  author       = {Da Yu and
                  Huishuai Zhang and
                  Wei Chen and
                  Tie{-}Yan Liu},
  title        = {Do not Let Privacy Overbill Utility: Gradient Embedding Perturbation
                  for Private Learning},
  booktitle    = {International Conference on Learning Representations (ICLR)},
  publisher    = {ICLR},
  year         = {2021},
  url          = {https://openreview.net/forum?id=7aogOj\_VYO0},
  bibsource    = {dblp computer science bibliography, https://dblp.org},
  address={Virtual},
  pages={}
}

@inproceedings{zhou2020,
  author       = {Yingxue Zhou and
                  Steven Wu and
                  Arindam Banerjee},
  title        = {Bypassing the Ambient Dimension: Private {SGD} with Gradient Subspace
                  Identification},
  booktitle    = {International Conference on Learning Representations (ICLR)},
  publisher    = {ICLR},
  year         = {2021},
  url          = {https://openreview.net/forum?id=7dpmlkBuJFC},
  bibsource    = {dblp computer science bibliography, https://dblp.org},
  pages={1--28},
  address={Virtual}
}

@inproceedings{asi2021,
  title={Private adaptive gradient methods for convex optimization},
  author={Asi, Hilal and Duchi, John and Fallah, Alireza and Javidbakht, Omid and Talwar, Kunal},
  booktitle={International Conference on Machine Learning},
  pages={383--392},
  year={2021},
  organization={PMLR},
  publisher={PMLR},
  address={Virtual}
}

@inproceedings{lee2018,
  title={Concentrated differentially private gradient descent with adaptive per-iteration privacy budget},
  author={Lee, Jaewoo and Kifer, Daniel},
  booktitle={SIGKDD International Conference on Knowledge Discovery \& Data Mining},
  pages={1656--1665},
  year={2018},
  publisher ={ACM},
  address={London, UK}
}

@inproceedings{zhang2023,
  author       = {Xinwei Zhang and
                  Zhiqi Bu and
                  Steven Wu and
                  Mingyi Hong},
  title        = {Differentially Private {SGD} Without Clipping Bias: An Error-Feedback
                  Approach},
  booktitle    = {International Conference on Learning Representations (ICLR)},
  publisher    = {ICLR},
  year         = {2024},
  url          = {https://openreview.net/forum?id=uFbWHyTlPn},
  bibsource    = {dblp computer science bibliography, https://dblp.org},
  address = {Vienna, Austria},
  pages = {1--27}
}

@article{bu2024,
  title={Automatic clipping: Differentially private deep learning made easier and stronger},
  author={Bu, Zhiqi and Wang, Yu-Xiang and Zha, Sheng and Karypis, George},
  journal={Advances in Neural Information Processing Systems},
  volume={36},
  year={2024},
  pages={}
}

@inproceedings{bethune2023,
  author       = {Louis B{\'{e}}thune and
                  Thomas Massena and
                  Thibaut Boissin and
                  Aur{\'{e}}lien Bellet and
                  Franck Mamalet and
                  Yannick Prudent and
                  Corentin Friedrich and
                  Mathieu Serrurier and
                  David Vigouroux},
  title        = {{DP-SGD} Without Clipping: The Lipschitz Neural Network Way},
  booktitle    = {International Conference on Learning Representations (ICLR)},
  publisher    = {ICLR},
  year         = {2024},
  url          = {https://openreview.net/forum?id=BEyEziZ4R6},
  bibsource    = {dblp computer science bibliography, https://dblp.org},
  address={Vienna, Austria},
  pages={}
}

@article{balle2018a,
  title={Privacy amplification by subsampling: Tight analyses via couplings and divergences},
  author={Balle, Borja and Barthe, Gilles and Gaboardi, Marco},
  journal={Advances in neural information processing systems},
  volume={31},
  year={2018},
  pages={}
}

@article{chen2023b,
  title={Differentially private deep learning with dynamic privacy budget allocation and adaptive optimization},
  author={Chen, Lin and Yue, Danyang and Ding, Xiaofeng and Wang, Zuan and Choo, Kim-Kwang Raymond and Jin, Hai},
  journal={IEEE Transactions on Information Forensics and Security},
  year={2023},
  publisher={IEEE},
  pages={4422-4435},
  volume={18},
  number={1}
}

@inproceedings{xia2023,
  title={Differentially private learning with per-sample adaptive clipping},
  author={Xia, Tianyu and Shen, Shuheng and Yao, Su and Fu, Xinyi and Xu, Ke and Xu, Xiaolong and Fu, Xing},
  booktitle={the AAAI Conference on Artificial Intelligence},
  volume={37},
  pages={10444--10452},
  year={2023},
  publisher={AAAI},
  address={Washington, DC, USA}
}

@article{zaheer2018,
  title={Adaptive methods for nonconvex optimization},
  author={Zaheer, Manzil and Reddi, Sashank and Sachan, Devendra and Kale, Satyen and Kumar, Sanjiv},
  journal={Advances in neural information processing systems},
  volume={31},
  year={2018},
  pages={}
}

@article{gorbunov2020,
  title={Stochastic optimization with heavy-tailed noise via accelerated gradient clipping},
  author={Gorbunov, Eduard and Danilova, Marina and Gasnikov, Alexander},
  journal={Advances in Neural Information Processing Systems},
  volume={33},
  pages={15042--15053},
  year={2020}
}

@inproceedings{dwork2006,
  title={Our data, ourselves: Privacy via distributed noise generation},
  author={Dwork, C. and Kenthapadi, K. and McSherry, F. and Mironov, I. and Naor, M.},
  booktitle={International Conference on the Theory and Applications of Cryptographic Techniques},
  pages={486--503},
  year={2006},
  publisher={Springer},
  address={Lyon, France}
}

@article{deng2012,
  title={The mnist database of handwritten digit images for machine learning research [best of the web]},
  author={Deng, Li},
  journal={IEEE signal processing magazine},
  volume={29},
  number={6},
  pages={141--142},
  year={2012},
  publisher={IEEE}
}

@article{xiao2017,
  title={Fashion-mnist: a novel image dataset for benchmarking machine learning algorithms},
  author={Xiao, Han and Rasul, Kashif and Vollgraf, Roland},
  journal={arXiv preprint arXiv:1708.07747},
  year={2017},
  pages={1--6}
}

@techreport{krizhevsky2009,
 author = {Krizhevsky, Alex and Hinton, Geoffrey},
 address = {Toronto, Ontario},
 institution = {University of Toronto},
 number = {0},
 publisher = {Technical report, University of Toronto},
 title = {Learning multiple layers of features from tiny images},
 year = {2009},
 title_with_no_special_chars = {Learning multiple layers of features from tiny images},
 url = {https://www.cs.toronto.edu/~kriz/learning-features-2009-TR.pdf}
}

@inproceedings{zhang2025,
  author       = {Xinwei Zhang and
                  Zhiqi Bu and
                  Borja Balle and
                  Mingyi Hong and
                  Meisam Razaviyayn and
                  Vahab Mirrokni},
  title        = {DiSK: Differentially Private Optimizer with Simplified Kalman Filter
                  for Noise Reduction},
  booktitle    = {The Thirteenth International Conference on Learning Representations,
                  {ICLR} 2025, Singapore, April 24-28, 2025},
  year         = {2025},
  crossref     = {DBLP:conf/iclr/2025},
  url          = {https://openreview.net/forum?id=Lfy9q7Icp9},
  bibsource    = {dblp computer science bibliography, https://dblp.org}
}

@proceedings{DBLP:conf/iclr/2025,
  title        = {The Thirteenth International Conference on Learning Representations,
                  {ICLR} 2025, Singapore, April 24-28, 2025},
  publisher    = {OpenReview.net},
  year         = {2025},
  url          = {https://openreview.net/group?id=ICLR.cc/2025/Conference},
  bibsource    = {dblp computer science bibliography, https://dblp.org}
}

@inproceedings{chen2020a,
  title={A simple framework for contrastive learning of visual representations},
  author={Chen, Ting and Kornblith, Simon and Norouzi, Mohammad and Hinton, Geoffrey},
  booktitle={International conference on machine learning},
  pages={1597--1607},
  year={2020},
  organization={PmLR}
}

@article{wang2018,
  title={GLUE: A multi-task benchmark and analysis platform for natural language understanding},
  author={Wang, Alex and Singh, Amanpreet and Michael, Julian and Hill, Felix and Levy, Omer and Bowman, Samuel R},
  journal={arXiv preprint arXiv:1804.07461},
  year={2018}
}

@article{liu2019,
  author       = {Yinhan Liu and
                  Myle Ott and
                  Naman Goyal and
                  Jingfei Du and
                  Mandar Joshi and
                  Danqi Chen and
                  Omer Levy and
                  Mike Lewis and
                  Luke Zettlemoyer and
                  Veselin Stoyanov},
  title        = {RoBERTa: {A} Robustly Optimized {BERT} Pretraining Approach},
  journal      = {CoRR},
  volume       = {abs/1907.11692},
  year         = {2019},
  url          = {http://arxiv.org/abs/1907.11692},
  eprinttype    = {arXiv},
  eprint       = {1907.11692},
  bibsource    = {dblp computer science bibliography, https://dblp.org}
}

@inproceedings{liu2015deep,
  title={Deep learning face attributes in the wild},
  author={Liu, Ziwei and Luo, Ping and Wang, Xiaogang and Tang, Xiaoou},
  booktitle={Proceedings of the IEEE international conference on computer vision},
  pages={3730--3738},
  year={2015}
}

@inproceedings{deng2009imagenet,
  title={Imagenet: A large-scale hierarchical image database},
  author={Deng, Jia and Dong, Wei and Socher, Richard and Li, Li-Jia and Li, Kai and Fei-Fei, Li},
  booktitle={2009 IEEE conference on computer vision and pattern recognition},
  pages={248--255},
  year={2009},
  organization={Ieee}
}

@inproceedings{sutskever2013,
  title={On the importance of initialization and momentum in deep learning},
  author={Sutskever, Ilya and Martens, James and Dahl, George and Hinton, Geoffrey},
  booktitle={International conference on machine learning},
  pages={1139--1147},
  year={2013},
  organization={pmlr}
}

@inproceedings{kingma2015,
  author    = {Diederik P. Kingma and
               Jimmy Ba},
  title     = {Adam: {A} Method for Stochastic Optimization},
  booktitle = {3rd International Conference on Learning Representations, {ICLR} 2015,
               San Diego, CA, USA, May 7-9, 2015, Conference Track Proceedings},
  editor    = {Yoshua Bengio and
               Yann LeCun},
  year      = {2015},
  url       = {http://arxiv.org/abs/1412.6980},
  bibsource = {dblp computer science bibliography, https://dblp.org}
}

@article{amari1998,
  title={Natural gradient works efficiently in learning},
  author={Amari, Shun-Ichi},
  journal={Neural computation},
  volume={10},
  number={2},
  pages={251--276},
  year={1998},
  publisher={MIT Press}
}

@article{ganesh2023,
  title={Faster differentially private convex optimization via second-order methods},
  author={Ganesh, Arun and Haghifam, Mahdi and Steinke, Thomas and Guha Thakurta, Abhradeep},
  journal={Advances in Neural Information Processing Systems},
  volume={36},
  pages={79426--79438},
  year={2023}
}

@article{krouka2025,
  title={Communication Efficient Differentially Private Federated Learning Using Second Order Information},
  author={Krouka, Mounssif and Koskela, Antti and Kulkarni, Tejas},
  journal={Proceedings on Privacy Enhancing Technologies},
  year={2025}
}

@inproceedings{gupta2018,
  title={Shampoo: Preconditioned stochastic tensor optimization},
  author={Gupta, Vineet and Koren, Tomer and Singer, Yoram},
  booktitle={International Conference on Machine Learning},
  pages={1842--1850},
  year={2018},
  organization={PMLR}
}

@inproceedings{martens2015,
  title={Optimizing neural networks with kronecker-factored approximate curvature},
  author={Martens, James and Grosse, Roger},
  booktitle={International conference on machine learning},
  pages={2408--2417},
  year={2015},
  organization={PMLR}
}

@inproceedings{yao2021,
  title={Adahessian: An adaptive second order optimizer for machine learning},
  author={Yao, Zhewei and Gholami, Amir and Shen, Sheng and Mustafa, Mustafa and Keutzer, Kurt and Mahoney, Michael},
  booktitle={proceedings of the AAAI conference on artificial intelligence},
  volume={35},
  pages={10665--10673},
  year={2021}
}

@inproceedings{tang2024,
  title={Dp-adambc: Your dp-adam is actually dp-sgd (unless you apply bias correction)},
  author={Tang, Qiaoyue and Shpilevskiy, Frederick and L{\'e}cuyer, Mathias},
  booktitle={Proceedings of the AAAI Conference on Artificial Intelligence},
  volume={38},
  pages={15276--15283},
  year={2024}
}

@article{gilani2025,
  title={GeoClip: Geometry-Aware Clipping for Differentially Private SGD},
  author={Gilani, Atefeh and Tasnim, Naima and Sankar, Lalitha and Kosut, Oliver},
  journal={arXiv preprint arXiv:2506.06549},
  year={2025}
}

@book{nocedal2006,
  title={Numerical optimization},
  author={Nocedal, Jorge and Wright, Stephen J},
  year={2006},
  publisher={Springer}
}

@inproceedings{balle2018b,
  title={Improving the gaussian mechanism for differential privacy: Analytical calibration and optimal denoising},
  author={Balle, Borja and Wang, Yu-Xiang},
  booktitle={International conference on machine learning},
  pages={394--403},
  year={2018},
  organization={PMLR}
}
\clearpage
\appendix
\section{Additional Discussion of Related Work}
\label{app:related}

\paragraph{Clipping Bias Mitigation.}
To reduce the bias from gradient clipping, recent works have explored temporal-based methods. Xiao~\textit{et al.}~\citep{xiao2023} proposed to reduce the clipping bias using inner-outer momentum, enhanced network normalization, batch clipping with public data, and data pre-processing. Zhang~\textit{et al.}~\citep{zhang2023} introduced an error-feedback mechanism, DiceSGD, to accumulate the difference between clipped and unclipped gradients. However, maintaining historical buffers significantly increases computational and memory complexity, while error-feedback introduces additional variance as the accumulated error itself must be clipped and perturbed. Bethune~\textit{et al.}~\citep{bethune2023} derive analytical sensitivity bounds using Lipschitz-constrained networks. However, this approach relies on specialized model architectures, limiting its general applicability.

\paragraph{Geometry-Aware and Private Second-Order Methods.}
Geometric information has also been explored in private optimization
~\citep{ganesh2023,tang2024,krouka2025,gilani2025}. However, these methods are limited by restrictive assumptions~\citep{ganesh2023}, hyperparameter instability~\citep{tang2024}, and are infeasible on large datasets~\citep{krouka2025}. GeoClip~\citep{gilani2025}, the closest work to ours, also performs clipping and noise injection in an adaptive geometry estimated from privatized gradients. However, GeoClip minimizes DP noise, whereas we directly minimize an upper bound on clipping bias. We do not include GeoClip in our empirical comparison because its released implementation uses full-rank covariance estimation, which does not scale to the high-dimensional models considered in our experiments.

\section{Differential Privacy Background}
\label{app:dp}
Differential Privacy (DP)~\citep{dwork2014} is a rigorous mathematical technique to protect sensitive information within a dataset $D$, which is formally defined as follows:

\begin{definition}[($\epsilon, \delta$)-Differential Privacy]
A randomized algorithm $\mathcal{M}$ satisfies $(\epsilon, \delta)$-DP if for any two adjacent datasets $D$ and $D'$ that differ by a single element, and for any subset of outputs $S \subseteq \text{Range}(\mathcal{M})$, it holds that:
$$\Pr [\mathcal{M}(D) \in S] \le e^\epsilon \Pr[\mathcal{M}(D') \in S] + \delta, $$
\end{definition}
\noindent where $\epsilon$ and $\delta$ denote the privacy budget and the failure probability, respectively.
To achieve DP, Gaussian noise~\citep{balle2018b} can be added based on the $\ell_2$ sensitivity of the function.

\begin{definition}[Gaussian Mechanism]\label{def.gm}
Suppose an algorithm $\mathcal{A}: \mathcal{D} \to \mathbb{R}^d$ has $\ell_2$ sensitivity $\Delta_\mathcal{A}$. The Gaussian mechanism $\mathcal{M}(D) = \mathcal{A}(D) + w$, with $w \sim \mathcal{N}(0, \sigma^2 I_d)$, is $(\epsilon, \delta)$-DP if the noise $\sigma$ and privacy parameters satisfy:
$$\Phi\left(\frac{\Delta_\mathcal{A}}{2\sigma} - \frac{\epsilon \sigma}{\Delta_\mathcal{A}}\right) - e^\epsilon \Phi\left(-\frac{\Delta_\mathcal{A}}{2\sigma} - \frac{\epsilon \sigma}{\Delta_\mathcal{A}}\right) \le \delta,$$
where $\Phi(t) = \Pr[\mathcal{N}(0, 1) \le t]$ is the cumulative density function of the standard normal distribution and $\Delta_\mathcal{A} = \sup_{D, D'} \|\mathcal{A}(D) - \mathcal{A}(D')\|_2$, for any pair of adjacent datasets $D$ and $D'$.
\end{definition}

\section{Proof of Lemma~\ref{lemma:bias_bound}}
\label{app:proof_lemma}
\begin{proof}
We begin by define $r_{G_t}(g_{t, \xi}) = g_{t, \xi} - \text{clip}_{G_t}(g_{t, \xi}, C)$ and we expand this as: 
\begin{align}
    r_{G_t}(g_{t, \xi}) & = \left( 1 - \min\left(1, \frac{C}{\|g_{t, \xi}\|_{G_t}}\right) \right) g_{t, \xi}  = \left( 1 - \frac{C}{\|g_{t, \xi}\|_{G_t}} \right)_+ g_{t, \xi},
\end{align}
where $(x)_+ = \max(0, x)$.
The magnitude of the bias is $\|b_{G_t}(C)\|_{G_t} = \| \mathbb{E}[r_{G_t}(g_{t, \xi})] \|_{G_t}$. By Jensen's inequality, $\|\mathbb{E}[X]\|_{G_t} \le \mathbb{E}[\|X\|_{G_t}]$, we have:
\begin{align}\label{lemma:bias_bound.temp1}
    \|b_{G_t}(C)\|_{G_t} & \leq \mathbb{E}\left[ \| r_{G_t}(g_{t, \xi}) \|_{G_t} \right] = \mathbb{E}\left[ \left( 1 - \frac{C}{\|g_{t, \xi}\|_{G_t}} \right)_+ \|g_{t, \xi}\|_{G_t} \right].
\end{align}
Let $x = \|g_{t, \xi}\|_{G_t}$. We then prove for any $x \ge 0, C > 0$:
\begin{equation}\label{lemma:bias_bound.temp2}
    \left( 1 - \frac{C}{x} \right)_+ x \leq \frac{x^2}{C}.
\end{equation}
To verify the above inequality, we consider two cases:
\begin{itemize}
    \item Case 1 ($x \le C$): The LHS is 0. Since $x^2/C \ge 0$, the inequality holds.
    \item Case 2 ($x > C$): The LHS is $x - C$. The inequality $x - C \le x^2/C$ can be rearrange to $x^2 - Cx + C^2 \ge 0$. Then, we observe:
    \begin{equation}
        x^2 - Cx + C^2  = (x - \frac{C}{2})^2 + \frac{3}{4}C^2 > 0,
    \end{equation}the inequality holds.
\end{itemize}
Substituting Equation~(\ref{lemma:bias_bound.temp2}) into Equation~(\ref{lemma:bias_bound.temp1}) yields:
\begin{align}
    \|b_{G_t}(C)\|_{G_t} & \leq \mathbb{E}\left[ \frac{\|g_{t, \xi}\|_{G_t}^2}{C} \right] =  \frac{1}{C} \mathbb{E}[g_{t, \xi}^\top {G_t} g_{t, \xi}] =  \frac{\text{Tr}({G_t} \mathbb{E}[g_{t, \xi}g_{t, \xi}^\top])}{C} =  \frac{\text{Tr}(G_t F_t)}{C}.
\end{align}
This completes the proof.
\end{proof}

\section{Proof of Theorem~\ref{thm:optimal_G}}
\label{app:proof_theorem}

\begin{proof}
We aim to minimize the following optimization problem:
\begin{equation}
    \min_{G_t \succ 0} \quad \frac{\text{Tr}(G_t F_t)}{C} \quad \text{s.t.} \quad \text{Tr}(G_t^{-1}) = d.
\end{equation}
We introduce a Lagrange multiplier $\lambda_1 > 0$ for the equality constraint. The Lagrangian function $\mathcal{L}(G_t, \lambda)$ is defined as:
\begin{equation}
    \mathcal{L}(G_t, \lambda_1) = \frac{\text{Tr}(G_t F_t)}{C} + \lambda_1 \left( \text{Tr}(G_t^{-1}) - d \right).
\end{equation}
Then, the gradient is derived as follows:
\begin{align}
    \frac{\partial \mathcal{L}}{\partial G_t} &= \frac{1}{C} \frac{\partial \text{Tr}(G_t F_t)}{\partial G_t} + \lambda_1 \frac{\partial \text{Tr}(G_t^{-1})}{\partial G_t} = \frac{1}{C} F_t - \lambda_1 G_t^{-2}.
\end{align}
Setting the gradient to zero ($\frac{\partial \mathcal{L}}{\partial G_t} = 0$) yields:
\begin{equation}
    \frac{1}{C} F_t = \lambda_1 G_t^{-2}.
\end{equation}
Rearranging the terms to solve for $G_t$, we have:
\begin{equation}
    G_t^2 = \lambda_1 C F_t^{-1}.
\end{equation}
Since $G_t$ and $F_t$ are symmetric positive definite matrices, we obtain:
\begin{equation}
    G_t^* = \sqrt{\lambda_1 C} F_t^{-\frac{1}{2}}.
\end{equation}
Let scalar $\lambda_t = \sqrt{\lambda_1 C}$. The optimal space is given by:
\begin{equation}
    G_t^* = \lambda_t F_t^{-\frac{1}{2}}.
\end{equation}
Substituting $G_t^*=\lambda_tF_t^{-1/2}$ into the constraint
$\mathrm{Tr}((G_t^*)^{-1})=d$ gives
\begin{equation}
\frac{1}{\lambda_t}\mathrm{Tr}(F_t^{1/2})=d,
\end{equation}
and therefore
\begin{equation}
\lambda_t=\frac{\mathrm{Tr}(F_t^{1/2})}{d}.
\end{equation}
which completes the proof.
\end{proof}

\section{Proof of Lemma~\ref{lem:beta_stability}}
\label{app:proof_stability}

To theoretically justify the approximation $\tilde{v}_t \approx \tilde{v}_{t+1}$, we have the following lemma:

\begin{lemma}[Temporal Stability] 
\label{lem:beta_stability}
Under the gradient clipping threshold $C$ and DP noise variance $\sigma^2$, the expected divergence between $\tilde{v}_{t+1}$ and $\tilde{v}_t$ is bounded by:
\begin{equation}
\mathbb{E}[|\tilde{v}_{t+1} - \tilde{v}_t|_1] \le \mathcal{O}\left((1-\beta)(C^2 + d\sigma^2)\right).
\end{equation}
\end{lemma}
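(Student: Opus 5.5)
The plan starts from the EMA recursion in Algorithm~\ref{alg:cdpsgd}, $\tilde v_{t+1}=\beta\tilde v_t+(1-\beta)\tilde g_t^2$, which gives exactly
\begin{equation*}
\tilde v_{t+1}-\tilde v_t=(1-\beta)\bigl(\tilde g_t^2-\tilde v_t\bigr).
\end{equation*}
Applying the triangle inequality coordinatewise and summing yields
\begin{equation*}
\mathbb{E}\|\tilde v_{t+1}-\tilde v_t\|_1\le(1-\beta)\bigl(\mathbb{E}\|\tilde g_t^2\|_1+\mathbb{E}\|\tilde v_t\|_1\bigr).
\end{equation*}
Since all entries are nonnegative, $\|\tilde g_t^2\|_1=\|\tilde g_t\|_2^2$ and $\|\tilde v_t\|_1=\mathbf{1}^\top\tilde v_t$. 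The claim therefore reduces to two facts: a uniform bound $\mathbb{E}\|\tilde g_s\|_2^2=\mathcal{O}(C^2+d\sigma^2)$ for every $s$, and its propagation to $\tilde v_t$.

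For the first fact, write $\tilde g_t=\bar c_t+w_t$, where $\bar c_t=\frac1B\sum_{\xi\in\mathcal{B}_t}\mathrm{clip}_{G_t}(g_{t,\xi},C)$. Conditional on the past, $w_t$ is independent of the batch and has mean zero, so the cross term vanishes:
\begin{equation*}
\mathbb{E}\|\tilde g_t\|_2^2=\mathbb{E}\|\bar c_t\|_2^2+\sigma^2\,\mathbb{E}\,\mathrm{Tr}(G_t^{-1})=\mathbb{E}\|\bar c_t\|_2^2+d\sigma^2.
\end{equation*}
Here we use the normalization $\mathrm{Tr}(G_t^{-1})=d$ enforced by Equation~\eqref{eq:geometry_update}. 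Each clipped vector satisfies $\|\mathrm{clip}_{G_t}(g,C)\|_{G_t}\le C$, so $\|\mathrm{clip}_{G_t}(g,C)\|_2^2\le C^2/\lambda_{\min}(G_t)$. Under the standing assumption $G_t\succeq\mu I_d$ from the convergence analysis, this is at most $C^2/\mu$. Jensen on the average then gives $\|\bar c_t\|_2^2\le C^2/\mu$, so $\mathbb{E}\|\tilde g_t\|_2^2\le C^2/\mu+d\sigma^2$. We absorb $1/\mu$ into the $\mathcal{O}(\cdot)$; it is $1$ for isotropic clipping and $\mathcal{O}(1)$ whenever $G_t$ is bounded below.

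For the second fact, take expectations of the $\ell_1$ mass of the EMA:
\begin{equation*}
\mathbb{E}\,\mathbf{1}^\top\tilde v_{t+1}=\beta\,\mathbb{E}\,\mathbf{1}^\top\tilde v_t+(1-\beta)\,\mathbb{E}\|\tilde g_t\|_2^2.
\end{equation*}
Induct with the initialization $\tilde v_0=0$, or more generally any $\tilde v_0$ with $\|\tilde v_0\|_1\le C^2/\mu+d\sigma^2$. This gives $\mathbb{E}\|\tilde v_t\|_1\le C^2/\mu+d\sigma^2$ for all $t$, because it is a convex combination of quantities obeying the same bound. Combining the two facts yields $\mathbb{E}\|\tilde v_{t+1}-\tilde v_t\|_1\le 2(1-\beta)(C^2/\mu+d\sigma^2)$, which is the claimed order.

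The main obstacle is the step controlling the clipped-gradient norm in Euclidean rather than $G_t$-geometry. Clipping only bounds the $G_t$-norm, so some lower bound on $G_t$ is unavoidable. I would try to justify $G_t\succeq\mu I_d$ through the assumption in Appendix~\ref{app:assumptions}, or alternatively note that with $G_t=\lambda_t\mathrm{diag}(v_t^{-1/2})$ the smallest eigenvalue is $\mathrm{Mean}(v_t^{1/2})/\max_i v_{t,i}^{1/2}\ge 1/d$, which gives an explicit but dimension-dependent constant. A secondary subtlety is that $G_t$ depends on past noise. This is handled by conditioning on the history $\mathcal{F}_t$, since $w_t$ is fresh given $\mathcal{F}_t$.
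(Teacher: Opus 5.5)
Your proof is correct and follows the paper's argument: the same identity $\tilde v_{t+1}-\tilde v_t=(1-\beta)(\tilde g_t^2-\tilde v_t)$, the same triangle inequality, a bound on $\mathbb{E}\|\tilde g_t\|_2^2$ via $G_t\succeq\mu I_d$ and the Rayleigh quotient, and the same uniform bound carried over to the EMA $\tilde v_t$. Your version is slightly sharper, giving $C^2/\mu+d\sigma^2$ instead of $(C^2+d\sigma^2)/\mu$ by using $\mathrm{Tr}(G_t^{-1})=d$ directly for the noise term. It also makes explicit the induction and the condition on $\tilde v_0$, which the paper's one-line bound on $\mathbb{E}\|\tilde v_t\|_1$ leaves implicit.
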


\noindent This lemma indicates that as $\beta \to 1$, the difference between $\tilde{v}_{t+1}$ and $\tilde{v}_{t}$ approaches $0$. However, a large $\beta$ could depend too heavily on on historical gradients and potentially slow the convergence. The detailed proof of this lemma is provided in Appendix~\ref{app:proof_stability} and we empirically verified how $\beta$ would affect our performance in Appendix~\ref{app:exp_ps}.
\begin{proof}
Recall the update rule for the privatized second-moment estimator in CA-DPSGD:
\begin{equation}
    \tilde{v}_{t+1} = \beta \tilde{v}_t + (1-\beta) \tilde{g}_{t}^2,
\end{equation}
We aim to bound the expected $\ell_1$-norm of the difference $\Delta_t = \tilde{v}_{t+1} - \tilde{v}_t$.
Substituting the update rule into the difference term, yields:
\begin{align}
    \tilde{v}_{t+1} - \tilde{v}_t 
    &= (\beta - 1)\tilde{v}_t + (1-\beta)\tilde{g}_{t}^2 \notag \\
    &= (1-\beta) (\tilde{g}_{t}^2 - \tilde{v}_t).
\end{align}
Taking the $\ell_1$-norm and the expectation on both sides, and applying the triangle inequality, we have:
\begin{align}
    &\mathbb{E}[\|\tilde{v}_{t+1} - \tilde{v}_t\|_1] \notag \\
    = & (1-\beta) \mathbb{E}[\|\tilde{g}_{t}^2 - \tilde{v}_t\|_1] \notag \\
    \le & (1-\beta) \left( \mathbb{E}[\|\tilde{g}_{t}^2\|_1] + \mathbb{E}[\|\tilde{v}_t\|_1] \right).
\end{align}
Note that for any vector $x$, $\|x^2\|_1 = \|x\|_2^2$. By utilizing the Bounded Clipping Matrix Assumption and the Rayleigh quotient inequality,  we have:
\begin{equation}
    \mathbb{E}[\|\tilde{g}_{t+1}\|_2^2] \le \frac{C^2 + d\sigma^2}{\mu}.
\end{equation}
Since $\tilde{v}_t$ is an exponential moving average of previous noisy gradients, its expected norm is also bounded by:
\begin{equation}
    \mathbb{E}[\|\tilde{v}_t\|_1] \le \max_{\tau \le t} \mathbb{E}[\|\tilde{g}_{\tau}\|_2^2] \le \frac{C^2 + d\sigma^2}{\mu}.
\end{equation}
Substituting these bounds back into the inequality, yields:
\begin{align}
    \mathbb{E}[\|\tilde{v}_{t+1} - \tilde{v}_t\|_1] 
    &\le (1-\beta) \left( (\frac{C^2 + d\sigma^2}{\mu}) + (\frac{C^2 + d\sigma^2}{\mu}) \right) = \frac{2}{\mu}(1-\beta)(C^2 + d\sigma^2).
\end{align}
This completes the proof.
\end{proof}

\section{Privacy Analysis}
\label{app:proof_privacy}
\begin{proof}
    We first define a transformed per-sample gradient as $\tilde{z}_{t, \xi} = (G_t)^\frac{1}{2}\tilde{g}_{t, \xi}$. According to Steps 3 - 4 in Algorithm~\ref{alg:cdpsgd}, we obtain the $l_2$-norm of $\tilde{z}_{t, \xi}$ is bounded by:
    \begin{equation}
        \|\tilde{z}_{t, \xi}\|_2 = \|(G_t)^\frac{1}{2}\tilde{g}_{t, \xi}\|_2 =  \sqrt{\tilde{g}_{t, \xi}^\top {G_t} \tilde{g}_{t, \xi}} = \|\tilde{g}_{t, \xi}\|_{G_t} \leq C.
    \end{equation}
    Then, we obtain that the covariance matrix of the injected DP noise in the transformed space $w_t' = (G_t)^\frac{1}{2}w_t$ is:
    \begin{align}
    \text{Cov}(w_t') =\ & (G_t)^{\frac{1}{2}}(\sigma^2 (G_t)^{-1})((G_t)^{\frac{1}{2}})^T = \sigma^2 I_d.
    \end{align}

\noindent Thus, by utilizing the Gaussian mechanism in Definition~\ref{def.gm} and privacy amplification by subsampling~\citep{balle2018a}, $\tilde{z}_t = \frac{1}{B} \sum_{\xi} \tilde{z}_{t, \xi} + w_t'$ in each training iteration is protected by \((\mathcal{O}(\frac{qC}{B\sigma}, \frac{\delta}{T}))\)-DP. 

Given $G_t$ is derived from the history of previously privatized gradients and $\tilde{g}_t = (G_t)^{-\frac{1}{2}}\tilde{z}_t$, by utilizing post-processing property~\citep{dwork2006}, the noisy batch-averaged gradient $\tilde{g}_t$ is also protected by \((\mathcal{O}(\frac{qC}{B\sigma}, \frac{\delta}{T}))\)-DP. We then apply the moments accountant method~\citep{abadi2016} over \(T\) iterations, and finally obtain that the overall privacy guarantee is \((\epsilon,\delta)\)-DP, which completes the proof.
\end{proof}

\section{Assumptions for Convergence Analysis}
\label{app:assumptions}
\begin{assumption}[$L$-Smoothness]
\label{ass.sm}
The loss function $f(x)$ is a $L$-smooth function, which means that for any $x, y \in \mathbb{R}^d$:
$$\|\nabla f(x) - \nabla f(y)\|_2 \le L \|x - y\|_2.$$
\end{assumption}

\begin{assumption}[Bounded Variance]
\label{ass.var}
The variance of the stochastic gradients is bounded. For any $x \in \mathbb{R}^d$, we have:
$$ \mathbb{E}_{\xi} [\|\nabla f(x; \xi) - \nabla f(x)\|_2^2] \le r^2.$$
\end{assumption}

\begin{assumption}[Bounded Clipping Matrix]
\label{ass.matrix}
$G_t$ is a positive definite matrix with bounded eigenvalues. There exists a constant $\mu > 0$ such that: $$\forall t, \ G_t \succeq \mu I_d.$$
\end{assumption}

The Assumption~\ref{ass.sm} is widely adopted in non-convex optimization~\citep{zaheer2018}. Assumption~\ref{ass.var} is standard in the analysis of gradient clipping~\citep{gorbunov2020}. Assumption~\ref{ass.matrix} is based on our definition of $G_t$, which ensures the matrix is non-singular, and is commonly used in non-private second-order optimizers~\citep{nocedal2006,martens2015}. 

\section{Proof of Theorem~\ref{thm:convergence}}
\label{app:proof_convergence}
We begin with the $L$-smoothness assumption. By substituting $y = x_{t+1}$ and $x = x_t$, we have:
\begin{equation}
    f(x_{t+1}) \le f(x_t) + \langle \nabla f(x_t), x_{t+1} - x_t \rangle + \frac{L}{2} \|x_{t+1} - x_t\|_2^2.
\end{equation}
Using the update rule $x_{t+1} = x_t -\eta \tilde{g}_t$, we get the following.
\begin{equation}
    f(x_{t+1}) \le f(x_t) - \eta \langle \nabla f(x_t), \tilde{g}_t \rangle + \frac{L \eta^2}{2} \|\tilde{g}_t\|_2^2.
\end{equation}
Applying the mathematics expectation and re-arranging terms, we obtain:
\begin{equation}
\label{eq:step1_result}
    \mathbb{E}[f(x_{t+1})] -f(x_t) \le  - \eta \langle \nabla f(x_t), \mathbb{E}[\tilde{g}_t] \rangle + \frac{L \eta^2}{2} \mathbb{E}[\|\tilde{g}_t\|_2^2].
\end{equation}
We then analyze the \textbf{first term} $- \eta \langle \nabla f(x_t), \mathbb{E}[\tilde{g}_t] \rangle$. Using the definition of $\tilde{g}_t$ and the linearity of $\mathbb{E}[\cdot]$, we have:
\begin{align}\label{eq:first_temp}
    \mathbb{E}[\tilde{g}_t] & = \mathbb{E}[\frac{1}{B} \sum_{\xi} \tilde{g}_{t, \xi} + w_t] = \frac{1}{B} \sum_{\xi}\mathbb{E}[\tilde{g}_{t, \xi}] + \mathbb{E}[w_t]  = \frac{1}{B} \sum_{\xi}\mathbb{E}[\tilde{g}_{t, \xi}]  = \nabla f(x_t) + b_t
\end{align}
where the last equality we define $b_t$ as the difference between the expected per-sample clipped gradient and the true gradient:
\begin{equation}\label{eq:b_t}
    b_t = \mathbb{E}[\tilde{g}_{t, \xi}] - \nabla f(x_t).
\end{equation}
Substituting Equation (\ref{eq:first_temp}) into $- \eta \langle \nabla f(x_t), \mathbb{E}[\tilde{g}_t] \rangle$, we have:
\begin{align}\label{eq:first_result}
    - \eta \langle \nabla f(x_t), \mathbb{E}[\tilde{g}_t] \rangle &= - \eta \langle \nabla f(x_t), \nabla f(x_t) + b_t \rangle \notag \\
    &= - \eta \|\nabla f(x_t)\|_2^2 - \eta \langle \nabla f(x_t), b_t \rangle \notag \\
    &\le - \eta \|\nabla f(x_t)\|_2^2 + \frac{\eta}{2} \|\nabla f(x_t)\|_2^2 + \frac{\eta}{2} \|b_t\|_2^2 \notag \\
    &= - \frac{\eta}{2} \|\nabla f(x_t)\|_2^2 + \frac{\eta}{2} \|b_t\|_2^2,
\end{align}
where we apply $-\langle a, b \rangle \le \|a\|\|b\| \le \frac{1}{2}\|a\|^2 + \frac{1}{2}\|b\|^2$ to the inequality.

\noindent For the \textbf{second term} $\frac{L \eta^2}{2} \mathbb{E}[\|\tilde{g}_t\|_2^2]$, given the DP noise is independent, we have:
\begin{equation}\label{eq:second_gt}
    \mathbb{E}[\|\tilde{g}_t\|_2^2] = \mathbb{E}[\|\frac{1}{B} \sum_{\xi}\tilde{g}_{t, \xi}\|^2_2] + \mathbb{E}[\|w_t\|_2^2].
\end{equation}
For term $\mathbb{E}[\|\frac{1}{B} \sum_{\xi}\tilde{g}_{t, \xi}\|^2_2]$, By utilizing $\mathbb{E}[\|X\|^2_2] = \|\mathbb{E}[X]\|^2_2 + \text{Tr}(\text{Var}(X))$, we have:
\begin{align}
    &\mathbb{E}\left[\left\|\frac{1}{B} \sum_{\xi \in \mathcal{B}_t}\tilde{g}_{t, \xi}\right\|^2_2\right]= \left\|\mathbb{E}\left[\frac{1}{B} \sum_{\xi \in \mathcal{B}_t}\tilde{g}_{t, \xi}\right]\right\|^2_2 + \text{Tr}\left(\text{Var}\left(\frac{1}{B} \sum_{\xi \in \mathcal{B}_t}\tilde{g}_{t, \xi}\right)\right).
\end{align}
We analyze the two terms on the RHS separately. For $\mathbb{E}\left[\frac{1}{B} \sum_{\xi}\tilde{g}_{t, \xi}\right]$, since the expectation is a linear operator, we obtain:
\begin{align}
    \mathbb{E}\left[\frac{1}{B} \sum_{\xi}\tilde{g}_{t, \xi}\right] = \frac{1}{B} \sum_{\xi} \mathbb{E}[\tilde{g}_{t, \xi}] = \mathbb{E}[\tilde{g}_{t, \xi}].
\end{align}
Recalling the definition of $b_t$ in Equation (\ref{eq:b_t}), we have:
\begin{align}
    \left\|\mathbb{E}\left[\frac{1}{B} \sum_{\xi}\tilde{g}_{t, \xi}\right]\right\|^2_2 = \|\nabla f(x_t) + b_t\|^2_2.
\end{align}
For the term $\text{Tr}\left(\text{Var}\left(\frac{1}{B} \sum_{\xi \in \mathcal{B}_t}\tilde{g}_{t, \xi}\right)\right)$, since the samples $\xi$ in the mini-batch $\mathcal{B}_t$ are sampled independently, the variance of their sum is the sum of their variances:
\begin{align}
    \text{Var}\left(\frac{1}{B} \sum_{\xi}\tilde{g}_{t, \xi}\right) &= \frac{1}{B^2} \sum_{\xi} \text{Var}(\tilde{g}_{t, \xi}) = \frac{1}{B^2} \cdot B \cdot \text{Var}(\tilde{g}_{t, \xi}) = \frac{1}{B} \text{Var}(\tilde{g}_{t, \xi}).
\end{align}
Taking the trace on both sides:
\begin{align}
    \text{Tr}\left(\text{Var}\left(\frac{1}{B} \sum_{\xi}\tilde{g}_{t, \xi}\right)\right) = \frac{1}{B} \text{Tr}(\text{Var}(\tilde{g}_{t, \xi})).
\end{align}
We then use the property $\text{Tr}(\text{Var}(X)) \le \mathbb{E}[\|X\|^2_2]$, and obtain:
\begin{align}
    \text{Tr}(\text{Var}(\tilde{g}_{t, \xi})) \le \mathbb{E}[\|\tilde{g}_{t, \xi}\|^2_2].
\end{align}
Next, we apply the contraction property of the clipping operation. Since $\|\tilde{g}_{t, \xi}\|_2 = \|\text{clip}(g_{t, \xi})\|_2 \le \|g_{t, \xi}\|_2$, we have:
\begin{align}
    \mathbb{E}[\|\tilde{g}_{t, \xi}\|^2_2] \le \mathbb{E}[\|g_{t, \xi}\|^2_2].
\end{align}
Utilizing the Bounded Variance Assumption, we obtain:
\begin{align}
    \text{Tr}(\text{Var}(\tilde{g}_{t, \xi})) \le \|\nabla f(x_t)\|^2_2 + r^2.
\end{align}
Substituting this back into the variance term equation:
\begin{align}
    \text{Tr}\left(\text{Var}\left(\frac{1}{B} \sum_{\xi}\tilde{g}_{t, \xi}\right)\right) \le \frac{1}{B} (\|\nabla f(x_t)\|^2_2 + r^2).
\end{align}

Combining the Expectation term and the Variance term, we obtain the final upper bound:
\begin{align}\label{eq:second_temp1}
    \mathbb{E}\left[\left\|\frac{1}{B} \sum_{\xi}\tilde{g}_{t, \xi}\right\|^2_2\right] \le \|\nabla f(x_t) + b_t\|^2_2 + \frac{1}{B}(\|\nabla f(x_t)\|^2_2 + r^2).
\end{align}
For term $\mathbb{E}[\|w_t\|_2^2]$, using the constraint $\text{Tr}((G_t)^{-1}) = d$, yields:
\begin{equation}\label{eq:second_temp2}
    \mathbb{E}[\|w_t\|_2^2] = \text{Tr}(\sigma^2 (G_t)^{-1}) = \sigma^2 \text{Tr}((G_t)^{-1}) = d \sigma^2.
\end{equation}
Substituting Equation~(\ref{eq:second_temp1}) and Equation~(\ref{eq:second_temp2}) into Equation~(\ref{eq:second_gt}), we obtain:
\begin{align}\label{eq:second_result}
    &\frac{L \eta^2}{2} \mathbb{E}[\|\tilde{g}_t\|_2^2] \notag \\
    \le & \frac{L \eta^2}{2} \left(\|\nabla f(x_t) + b_t\|^2_2 + \frac{1}{B}(\|\nabla f(x_t)\|^2_2 + r^2)+d \sigma^2\right) \notag \\
    \le & (L \eta^2 + \frac{L \eta^2}{2B}) \|\nabla f(x_t)\|^2_2 + L \eta^2 \|b_t\|^2_2 + \frac{L \eta^2}{2} (d \sigma^2 + \frac{r^2}{B}),
\end{align}
where the last inequality applies $\|a+b\|_2^2 \le 2\|a\|^2_2 + 2\|b\|_2^2.$
Then, substituting Equation~(\ref{eq:first_result}) and Equation~(\ref{eq:second_result}) into Equation~(\ref{eq:step1_result}), we obtain:
\begin{align}
    \mathbb{E}[f(x_{t+1})] - f(x_t) \le & - \frac{\eta}{2} \|\nabla f(x_t)\|_2^2 + \frac{\eta}{2} \|b_t\|_2^2 \notag \\
    & + \left(L \eta^2 + \frac{L \eta^2}{2B}\right) \|\nabla f(x_t)\|^2_2 + L \eta^2 \|b_t\|^2_2 \notag \\
    & + \frac{L \eta^2}{2} \left( d \sigma^2 + \frac{r^2}{B} \right).
\end{align}
Rearranging the terms, we have:
\begin{align}\label{totcon_temp}
    \mathbb{E}[f(x_{t+1})] - f(x_t) \le &  - \left( \frac{\eta}{2} - L \eta^2 \left(1 + \frac{1}{2B}\right) \right) \|\nabla f(x_t)\|^2_2 \notag \\
    & + \frac{L \eta^2}{2} \left( d \sigma^2 + \frac{r^2}{B} \right)+ \left( \frac{\eta}{2} + L \eta^2 \right) \|b_t\|_2^2 .
\end{align}
Given $G_t$ is defined as a bounded and positive definite matrix, there always exists a $\mu > 0$, such that $G_t \succeq \mu I$ holds for all $t$.
Therefore, by using the Rayleigh quotient inequality, we have:
\begin{equation}\label{b_t_bound}
    \|b_t\|_2^2 \le \frac{1}{\mu} \|b_t\|_{G_t}^2 \le \frac{1}{\mu} \left( \frac{\text{Tr}(G_t F_t)}{C} \right)^2.
\end{equation}
Therefore, substituting Equation~(\ref{b_t_bound}) into Equation~(\ref{totcon_temp}), we have:
\begin{align}
    &\mathbb{E}[f(x_{t+1})] - f(x_t) \notag \\
     \le &  - \left( \frac{\eta}{2} - L \eta^2 \left(1 + \frac{1}{2B}\right) \right) \|\nabla f(x_t)\|^2_2 \notag \\
    & + \frac{L \eta^2}{2} \left( d \sigma^2 + \frac{r^2}{B} \right) + \left( \frac{\eta}{2} + L \eta^2 \right) \frac{1}{\mu} \left( \frac{\text{Tr}(G_t F_t)}{C} \right)^2  \notag \\
     \le &  - \frac{\eta}{4}\|\nabla f(x_t)\|^2_2 + \frac{L \eta^2}{2} \left( d \sigma^2 + \frac{r^2}{B} \right)+ \frac{\eta}{\mu} \left( \frac{\text{Tr}(G_t F_t)}{C} \right)^2 ,
\end{align}
where the last inequality applies $\eta \le \min \left(\frac{1}{2L}, \frac{B}{2(2B+1)L}\right) = \frac{B}{2(2B+1)L}.$
Finally, averaging $t$ from 0 to $T-1$, we obtain:
\begin{align}
    &f^* - f(x_0) 
     \le   - \frac{\eta}{4}\sum_{t=0}^{T-1}\|\nabla f(x_t)\|^2_2  + \frac{L \eta^2 T}{2} \left( d \sigma^2 + \frac{r^2}{B} \right)  + \frac{\eta}{\mu C^2} \sum_{t=0}^{T-1}(\text{Tr}(G_t F_t))^2
\end{align}
Rearranging terms and dividing both sides by $\frac{T \eta}{4}$,  we have:
\begin{align}
     \min_t \mathbb{E}[\|\nabla f(x_t)\|^2_2] 
    \le  \frac{4(f(x_0) - f^*)}{\eta T} + 2 L \eta \left( d \sigma^2 + \frac{r^2}{B} \right) 
     + \frac{4\sum_{t=0}^{T-1}(\text{Tr}(G_t F_t))^2}{\mu C^2T},
\end{align}
which completes the proof.

\section{Experimental Setting}
\label{app:exp_setting}
\noindent\textbf{Datasets.} We conduct experiments on five image classification datasets (MNIST~\citep{deng2012}, Fashion-MNIST~\citep{xiao2017}, CIFAR-10~\citep{krizhevsky2009}, ImageNette~\citep{deng2009imagenet}, and CelebA~\citep{liu2015deep}) and four sentence classification datasets from GLUE benchmark~\citep{wang2018} (MNLI, QQP, QNLI, and SST-2). For all datasets, we follow the same data splitting methods as established in prior works~\citep{xia2023,bu2024}.

\noindent\textbf{Baselines.} We compare CA-DPSGD against three baselines: vanilla DPSGD~\citep{abadi2016} as the de facto standard, and two state-of-the-art variants, Auto-S~\citep{bu2024} and DP-PSAC~\citep{xia2023}, which enhance vanilla DPSGD by designing novel clipping functions.

\noindent\textbf{Optimizers.} We utilize SGD with momentum~\citep{sutskever2013} for MNIST, Fashion-MNIST, and CIFAR-10, and Adam~\citep{kingma2015} for all other datasets. 
\section{Additional Experiments: Compatibility with Other DPSGD Variants}
\label{app:exp_com}
\begin{figure}[t!]
    \centering

    \begin{subfigure}[b]{0.4\linewidth}
        \centering
        \includegraphics[width=\linewidth]{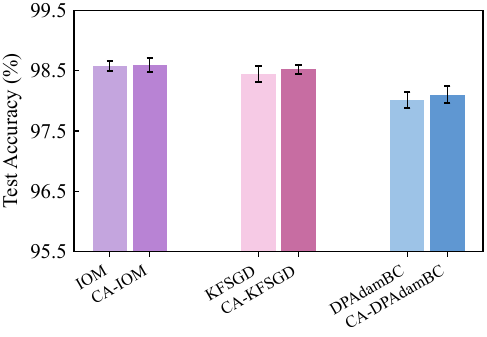}
        \caption{MNIST}
        \label{fig:mnist_comb}
    \end{subfigure}
    \begin{subfigure}[b]{0.4\linewidth}
        \centering
        \includegraphics[width=\linewidth]{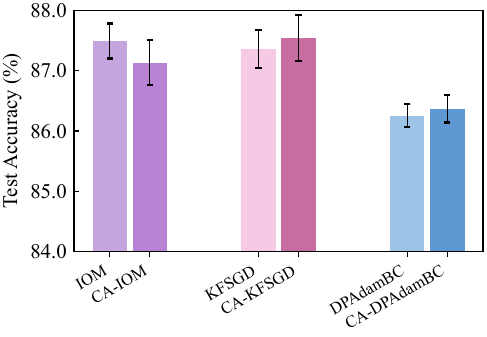}
        \caption{Fashion-MNIST}
        \label{fig:fashionmnist_comb}
    \end{subfigure}
    \\[0.5cm] 
    \begin{subfigure}[b]{0.4\linewidth}
        \centering
        \includegraphics[width=\linewidth]{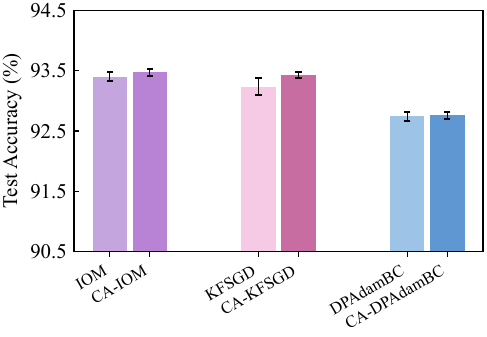}
        \caption{CIFAR-10}
        \label{fig:cifar10_comb}
    \end{subfigure}
    \begin{subfigure}[b]{0.4\linewidth}
        \centering
        \includegraphics[width=\linewidth]{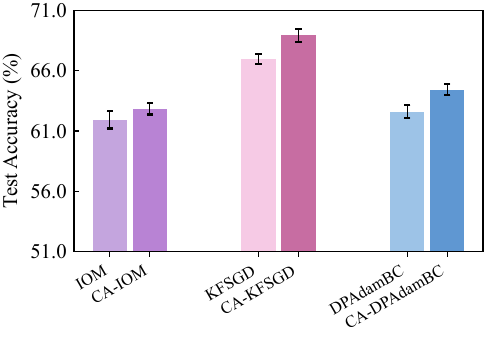}
        \caption{Imagenette}
        \label{fig:imagenette_comb}
    \end{subfigure}
    \\[0.5cm] 
    \begin{subfigure}[b]{0.4\linewidth}
        \centering
        \includegraphics[width=\linewidth]{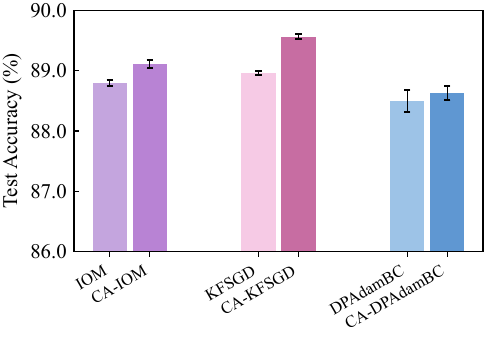}
        \caption{CelebA}
        \label{fig:celeba_comb}
    \end{subfigure}
    
    \caption{Test accuracy (\%) of IOM, KFSGD, and DP-AdamBC with/without CA-DPSGD on five image classification datasets. The prefix "CA-" denotes the method is integrated with CA-DPSGD (e.g., CA-IOM represents IOM combined with CA-DPSGD).}
    \label{fig:all_comb}
\end{figure}

\noindent\textbf{Models and Hyperparameters.} We use the model architectures and hyperparameter setting as in prior works~\citep{xia2023,bu2024} to ensure a fair comparison. Specifically, we use a 4-layer CNN for MNIST and Fashion-MNIST, a pretrained SimCLRv2 model~\citep{chen2020a} for CIFAR-10, and a ResNet9 model for the ImageNette and CelebA dataset. For all datasets in the GLUE benchmark, we fine-tune a pre-trained RoBERTa-base model~\citep{liu2019}. We set the hyperparameter $\beta$ to 0.999.

\noindent\textbf{Hardware and software information.}
All methods are implemented using Python3 and all
experiments are run on an i7-7800X CPU with a NVIDIA RTX A6000
GPU (48 GB). All experiments are repeated five times, with the 
mean and standard deviation reported.

We evaluate the effectiveness of CA-DPSGD as a clipping adapter by integrating it into three state-of-the-art DPSGD variants: (1) IOM~\citep{xiao2023}, which mitigates clipping bias from a temporal perspective via per-sample momentum; (2) KFSGD~\citep{zhang2025}, which reduces DP noise variance using a simplified Kalman filter; and (3) DP-AdamBC~\citep{tang2024}, which improves second-order optimisation using bias correction. 

Figure~\ref{fig:all_comb} illustrates the performance comparison of these methods with and without CA-DPSGD on five image classification datasets, where our method improves the utility of these methods on the image classification datasets in most cases. Specifically, on the Imagenette dataset, the results indicate that CA-DPSGD consistently enhances the utility of all three methods, improving the test accuracy of KFSGD and DP-AdamBC by approximately 2.0\% and 1.8\%, respectively. This implies that our method further mitigates the clipping bias, which is orthogonal to the improvements made by these variants.

However, CA-IOM exhibits a marginal performance drop compared to vanilla IOM on Fashion-MNIST. This is because momentum smoothing in the IOM amplifies the contribution of low-curvature dimensions in such a simple landscape. In contrast, CA-DPSGD prioritises high-curvature dimensions by relaxing their thresholds while enforcing stricter clipping on low-curvature dimensions. Consequently, this results in information loss in the low-curvature subspace, introducing an additional error that outweighs the benefits of geometric rectification.

\section{Additional Experiments: Ablation Study}
\label{app:exp_ab}

\begin{figure*}[h!] 
\centering
    \begin{subfigure}[b]{0.8\linewidth}
        \centering
        \includegraphics[width=\linewidth]{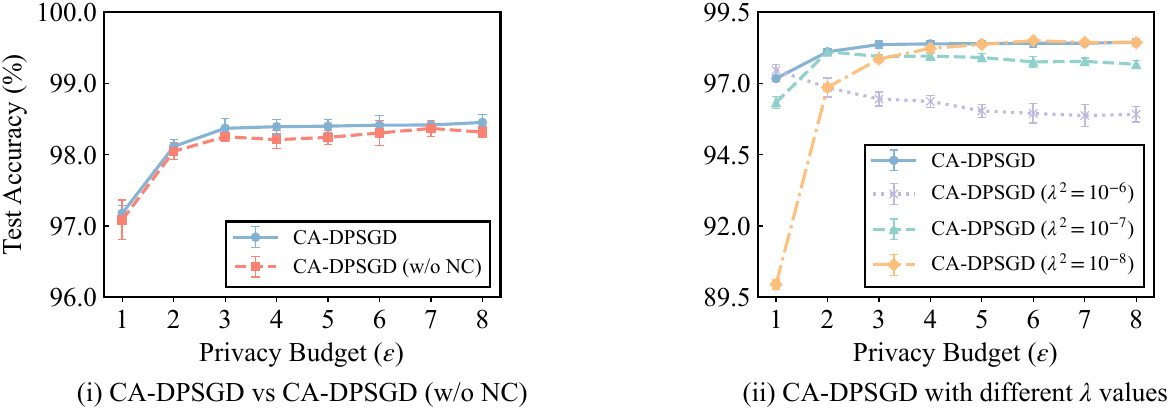}
        \caption{MNIST}
        \label{fig:appendix_ablation_mnist}
    \end{subfigure}
    \\[0.5cm]  
    \begin{subfigure}[b]{0.8\linewidth}
        \centering
        \includegraphics[width=\linewidth]{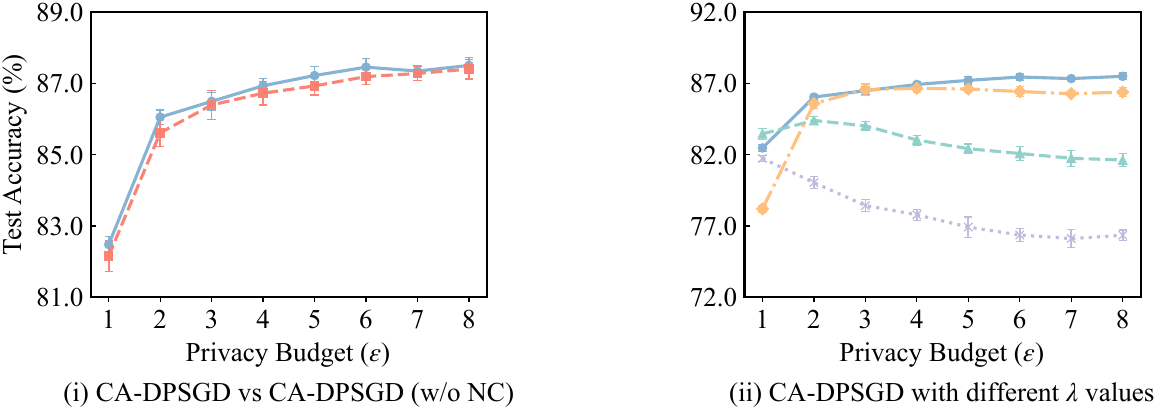}
        \caption{Fashion-MNIST}
        \label{fig:appendix_ablation_fmnist}
    \end{subfigure}
    \\[0.5cm] 
    \begin{subfigure}[b]{0.8\linewidth}
        \centering
        \includegraphics[width=\linewidth]{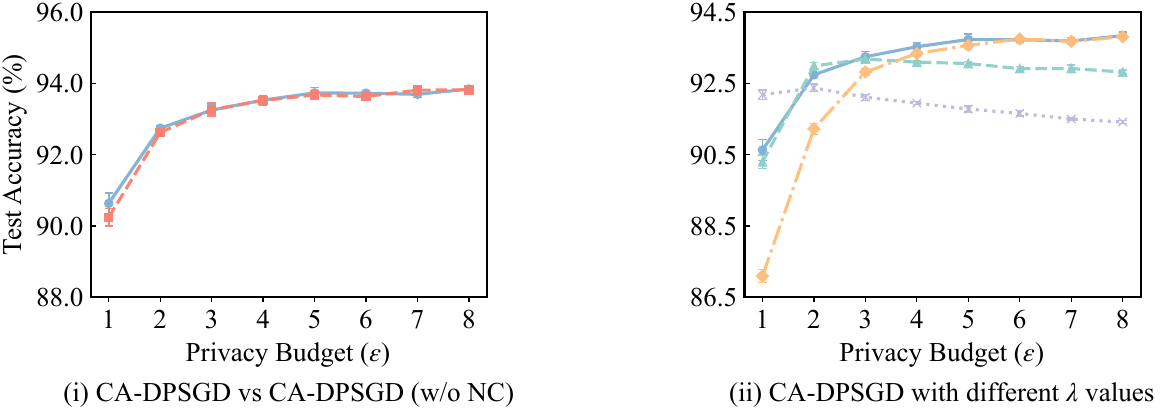}
        \caption{CIFAR-10}
        \label{fig:ppendix_ablation_cifar10}
    \end{subfigure}
    
    \caption{Ablation study on test accuracy (\%) of CA-DPSGD on three image classification datasets: MNIST, Fashion-MNIST, CIFAR-10. (a) Comparison of CA-DPSGD and CA-DPSGD without Noise Correction (w/o NC). (b) Comparison between the proposed adaptive $\lambda$ mechanism and fixed $\lambda$ values ($\lambda^2 \in \{10^{-6}, 10^{-7}, 10^{-8}\}$).}
    \label{fig:all_ablation}
\end{figure*}

As shown in the left column of Figure~\ref{fig:all_ablation}, the Noise Correction (NC) module improves the utility on MNIST and Fashion-MNIST in most cases. However, the performance gap between CA-DPSGD and its variant w/o NC is negligible on CIFAR-10. This is because we fine-tune a pre-trained model on CIFAR-10, a setting where the injected noise variance is small. Consequently, the error introduced by the DP noise is minimal, making the NC module less critical in this specific setting.

In contrast, the right column of Figure~\ref{fig:all_ablation} demonstrates the superiority of our adaptive $\lambda$ mechanism over fixed values ($\lambda^2 \in \{10^{-6}, 10^{-7}, 10^{-8}\}$). This performance difference stems from the fundamental bias-variance trade-off in DPSGD. Specifically, a large $\lambda$ (e.g., $10^{-6}$) suppresses the variance of the injected DP noise, yielding relative good performance at low $\epsilon$ regimes. However, it introduces an excessive clipping bias that hinders convergence as $\epsilon$ increases. In contrast, a small $\lambda$ reduces clipping bias but suffers from a high DP variance. Consequently, our adaptive approach achieves the best performance in most scenarios, reducing the need for manual hyperparameter tuning across diverse datasets and varying privacy budgets.

\section{Additional Experiments: Parameter Study of \texorpdfstring{$\beta$}{beta}}
\label{app:exp_ps}
\begin{figure*}[t]
    \centering
    \begin{subfigure}[b]{0.7\linewidth}
        \centering
        \includegraphics[width=\linewidth]{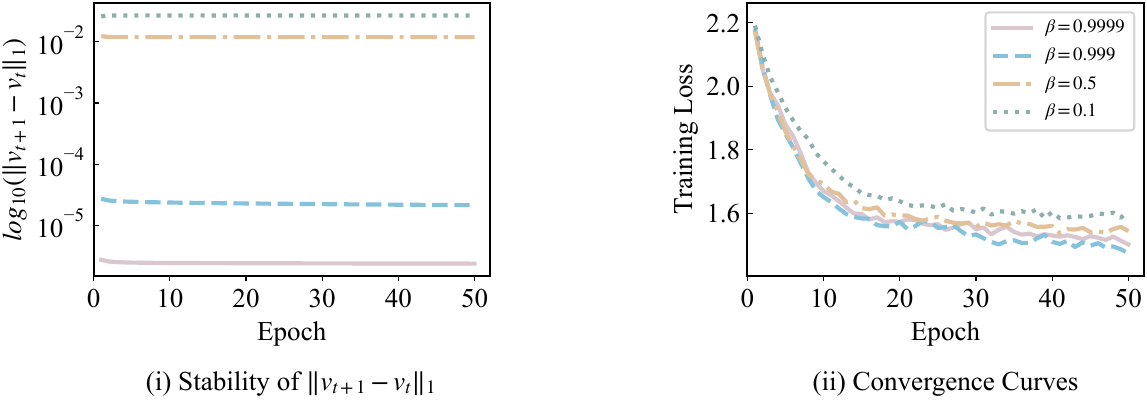}
        \caption{Imagenette}
        \label{fig:beta_imagenette}
    \end{subfigure}
    \vspace{1em}
    \begin{subfigure}[b]{0.7\linewidth}
        \centering
        \includegraphics[width=\linewidth]{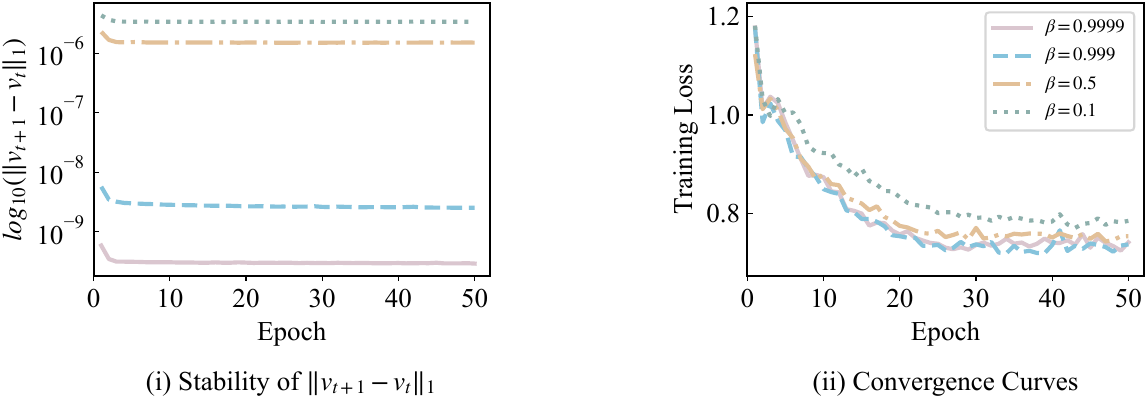}
        \caption{Fashion-MNIST}
        \label{fig:beta_fashionmnist}
    \end{subfigure}

    \caption{Hyperparameter analysis of $\beta$ on different datasets. (a) Results on Imagenette and (b) Results on Fashion-MNIST. In both cases, the left panels show the stability of $v_t$ via $\log_{10}(\|v_{t+1} - v_t\|_1)$, while the right panels show training loss curves. Larger $\beta$ leads to more stable momentum updates but slower convergence.}
    \label{fig:combined_beta_ablation}
\end{figure*}

The divergence of $v_t$ is theoretically bounded by $\mathcal{O}(1-\beta)$, implying that a larger $\beta$ enforces stricter stationarity. This theoretical insight is empirically illustrated in Figure~\ref{fig:beta_imagenette}(i), where increasing $\beta$ significantly reduces the difference $\|v_{t+1} - v_t\|_1$. Smaller $\beta$ values (e.g. $0.1$ and $0.5$) induce excessive difference between $v_{t+1}$ and $v_t$), thus degrading the convergence performance as shown in Figure~\ref{fig:beta_imagenette}(ii).

However, excessive stability could also hinder optimization. Figure~\ref{fig:beta_imagenette}(ii) reveals that $\beta=0.9999$ leads to slower convergence compared to $\beta=0.999$. Therefore, we utilize $\beta=0.999$ as an effective trade-off in our experiments, ensuring sufficient stability for the correction mechanism while maintaining efficient convergence. 

Figure~\ref{fig:beta_fashionmnist} shows trends consistent with our previous analysis: larger $\beta$ enhances stability, while smaller $\beta$ accelerates convergence. For Fashion-Mnist, $\beta=0.9999$ eventually achieves a performance comparable to $\beta=0.999$, despite a slower convergence speed. This is because the simplicity of the Fashion-MNIST landscape allows effective convergence.

\section{Notations}
\label{app:notation}
We summarize the notation in Table~\ref{tab.notations}.

\begin{table}[t!]
\centering
\caption{Table of Notations}
\label{tab.notations}
\begin{tabular}{|c|c|}
\hline
\multicolumn{1}{|c|}{\textbf{Notation}} & \multicolumn{1}{c|}{\textbf{Description}} \\
\hline
$\epsilon$ & Privacy budget\\
\hline
$\delta$ & Failure factor \\
\hline
$x_t$ & Model parameters at iteration $t$ \\
\hline
$d$ & Dimensionality of model parameters \\
\hline
$N$ & Total number of samples in dataset \\
\hline
$B$ & Batch size \\
\hline
$\eta$ & Learning rate \\
\hline
$C$ & Clipping threshold \\
\hline
$\sigma^2$ & DP noise variance \\
\hline
$f(x)$ & Loss function \\
\hline
$g_{t, \xi}$ & Per-sample gradient for sample $\xi$ \\
\hline
$\tilde{g}_{t, \xi}$ & Clipped per-sample gradient \\
\hline
$\tilde{g}_t$ & Noisy batch-averaged gradient \\
\hline
$F_t$ & Fisher Information Matrix \\
\hline
$G_t^*$ & Optimal curvature-aware clipping matrix \\
\hline
$\lambda_t$ & Adaptive scaling factor \\
\hline
$v_t$ & Rectified second-order moment estimate \\
\hline
$\tilde{v}_t$ & Noisy second-order moment (EMA) \\
\hline
$\beta$ & EMA factor \\
\hline
$L$ & Lipschitz smoothness constant \\
\hline
$r^2$ & Upper bound on gradient variance \\
\hline
\end{tabular}
\end{table}

\end{document}